\numberwithin{equation}{section}
\newtheorem{theorem}{Theorem}[section]
\newtheorem{definition}{Definition}[section]
\newtheorem{proposition}{Proposition}[section]
\newtheorem{remark}{Remark}[section]
\newtheorem{lemma}{Lemma}[section]
\newcommand{\ceil}[1]{\left\lceil #1 \right\rceil}
\def\mF{{\mathcal F}}
\def\mP{{\mathcal P}}
\def\mA{{\mathcal A}}
\def\mH{{\mathcal H}}
\def\mT{{\mathcal T}}
\def\mG{{\mathcal G}}
\def\E{{\mathbb E}}
\def\R{{\mathbb R}}
\def\N{{\mathbb N}}
\def\bP{{\mathbf P}}
\def\bE{{\mathbf E}}
\def\gt{{\rightarrow}}
\def\eps{{\varepsilon}}
\def\mW{{\mathcal{W}}}
\def\NN{{\mathrm{NN}}}
\def\DNN{{\mathrm{DNN}}}
 \def\1{{\mathbf 1}}
 \def\trn{{\hbox{\it\tiny T}}} 
\def\ksd{\textrm{KSD }}
\def\mmd{\textrm{MMD }}
\def\relu{\textrm{ReLU}}
\def\sigmoid{\textrm{Sigmoid}}
\DeclareMathOperator{\ksdop}{KSD}
\DeclareMathOperator{\mmdop}{MMD}
\DeclareMathOperator{\diag}{diag}
\DeclareMathOperator{\tr}{Tr}
\newtheorem{assumption}{Assumption}
\newtheorem{assumpK}{Assumption}
\title{A Universal Approximation Theorem of Deep Neural Networks for Expressing Probability Distributions}
\author{%
   Yulong Lu\\
  Department of Mathematics and Statistics\\
  University of Massachusetts Amherst\\
  Amherst, MA 01003 \\
  \texttt{lu@math.umass.edu} \\
   \And
   Jianfeng Lu\\
    Mathematics Department\\
 Duke University\\
  Durham, NC 27708 \\
  \texttt{jianfeng@math.duke.edu}
}
\begin{document}

\maketitle

\begin{abstract}
 This paper studies the universal approximation property of deep neural networks for representing probability distributions. Given a target distribution $\pi$ and a source distribution $p_z$ both defined on $\mathbb{R}^d$, we prove under some assumptions that there exists a deep neural network $g:\mathbb{R}^d\gt \mathbb{R}$ with ReLU activation such that the push-forward measure $(\nabla g)_\# p_z$ of $p_z$ under the map $\nabla g$ is arbitrarily close to the target measure $\pi$. The closeness  are measured by three classes of integral probability metrics between probability distributions: $1$-Wasserstein distance, maximum mean distance (MMD) and kernelized Stein discrepancy (KSD).  We prove upper bounds for the size (width and depth) of the deep neural network in terms of the dimension $d$ and the approximation error  $\varepsilon$ with respect to the three discrepancies. In particular, the size of neural network can grow exponentially in $d$ when $1$-Wasserstein distance is used as the discrepancy, whereas for both MMD and KSD the size of neural network only depends on $d$ at most polynomially.  Our proof relies on convergence estimates of empirical measures under aforementioned discrepancies and semi-discrete optimal transport.
\end{abstract}

\section{Introduction}
In recent years, deep learning has achieved unprecedented  success in numerous machine learning problems~\cite{lecun2015deep, sriperumbudur2010hilbert}. The success of deep learning is largely attributed to the usage of deep neural networks (DNNs) for representing and learning the unknown structures in machine learning tasks, which are usually modeled by some unknown function mappings or  unknown probability distributions. The effectiveness of using neural networks (NNs) in approximating functions has been justified rigorously in the last three decades. Specifically, a series of early works \cite{cybenko1989approximation,funahashi1989approximate,hornik1989multilayer,barron1993universal} on universal approximation theorems show that a continuous function defined on a bounded domain can be approximated by a sufficiently large shallow (two-layer) neural network. In particular, the result by \cite{barron1993universal}  quantifies the approximation error of shallow neural networks in terms of the decay property of the Fourier transform of the function of interest. Recently, the expressive power of DNNs for approximating functions have received increasing attention starting from the works by \cite{liang2016deep} and \cite{yarotsky2017error};  see also \cite{yarotsky2018optimal,petersen2018optimal,shaham2018provable,lu2017expressive,shen2019deep,daubechies2019nonlinear,lu2020deep} for more recent developments. The theoretical benefits of using deep neural networks over shallow neural networks have been demonstrated in a sequence of depth separation results; see e.g.  \cite{eldan2016power,sun2016depth,  telgarsky2015representation,daniely2017depth}

Compared to a vast number of theoretical results on neural networks for approximating functions, the use of neural networks for expressing distributions is far less understood on the theoretical side. The idea of using  neural networks for modeling distributions underpins an important class of unsupervised learning techniques called {\em generative models}, where the goal is to approximate or learn complex probability distributions from the training samples drawn from the distributions. Typical generative models include Variational Autoencoders \cite{kingma2013auto}, Normalizing Flows \cite{rezende2015variational} and Generative
Adversarial Networks (GANs) \cite{goodfellow2014generative}, just to name a few. In these generative models, the probability distribution of interest can be very complex or computationally intractable, and is usually  modelled by transforming a simple distribution using some map parameterized by a (deep) neural network. In particular, a GAN consists of a game between a generator and a discriminator which are represented
by deep neural networks:  the generator attempts to generate fake samples whose distribution is indistinguishable from the real distribution and it generate samples by mapping samples from a simple input distribution (e.g. Gaussian) via a deep neural network; the discriminator attempts to learn how to tell the fake apart from the real. Despite the great empirical success of GANs  in various applications, its theoretical analysis is far from complete. Existing theoretical works on GANs are mainly focused on the trade-off between the generator and the discriminator (see e.g.  \cite{liu2017approximation,arora2017generalization,arora2017gans, liang2018well,bai2018approximability}). The key message from these works is that the discriminator family needs to be chosen appropriately according to the generator family in order to obtain a good generalization error. 
\smallskip 

\noindent\textbf{Our contributions.} In this work, we focus on an even more fundamental question on GANs and other generative models which is not yet fully addressed. Namely how well can DNNs express probability distributions? We shall answer this question by making the following contributions. 

\begin{itemize}[noitemsep,topsep=1pt,parsep=1pt,partopsep=1pt,wide]
    \item Given a fairly general source distribution and a target distribution defined on $\R^d$ which satisfies certain integrability assumptions, we show that there is a  ReLU DNN with $d$ inputs and one output such that the push-forward of the source distribution via the gradient of the output function defined by the DNN is arbitrarily close to the target. We measure the closeness between probability distributions by three integral probability metrics (IPMs): 1-Wasserstein metric, maximum mean discrepancy and kernelized Stein discrepancy.
    
    \item  Given a  desired approximation error $\eps$, 
    we prove complexity upper bounds for the 
    depth and width of the DNN needed to attain the  given approximation error with respect to  the three IPMs mentioned above; our complexity upper bounds are given with explicit dependence on the dimension $d$ of the target distribution and the approximation error $\eps$.

\end{itemize}

It is also worth mentioning that the DNN constructed in the paper is explicit: the output function of the DNN is the maximum of finitely many (multivariate) affine functions, with the affine parameters determined explicitly in terms of the source measure and target measure.
\smallskip 

\noindent\textbf{Related work.} Let  us discuss some previous related work and compare the results there with ours. Kong and Chaudhuri \cite{KongChaudhuri20} analyzed  the expressive power of normalizing flow models under the $L^1$-norm of distributions and showed that  the flow models have limited approximation capability in high dimensions. We show that feedforward DNNs can approximate a general class of distributions in high dimensions with respect to three IPMs.  Lin and Jegelka \cite{lin2018resnet} studied the universal approximation of certain ResNets for multivariate functions; the approximation result there however was not quantitative and did not consider the universal approximation of ResNets for  distributions. The work by Bailey and Telgarsky \cite{bailey2018size} considered the approximation power of DNNs for expressing uniform and Gaussian distributions in the Wasserstein distance, whereas we prove quantitative approximation results for fairly general distributions under three IPMs. The work by Lee et al. \cite{Lee2017Ability} is closest to ous, where the authors considered a class of probability distributions that are given as push-forwards of a base distribution by a  class of Barron functions, and showed that those distributions can be approximated in Wasserstein metrics by push-forwards of NNs, essentially relying on the ability of NNs for approximating Barron functions. It is however not clear what probability distributions are given by push-forward of a base one by Barron functions.
In this work, we  provide more explicit and direct criteria of the target distributions.

\noindent\textbf{Notations.} 
Let us introduce several definitions and notations to be used throughout the paper. We start with the definition of a fully connected and feed-forward  neural network.
\begin{definition}
A (fully connected and feed-forward) neural network of $L$ hidden layers takes an input vector  $x\in \R^{N_0}$, outputs a vector  $y\in \R^{N_{L+1}}$ and has $L$ hidden layers of sizes $N_1, N_2,\cdots N_L$. The neural network is parametrized by the weight matrices $W^{\ell}\in \R^{N_{\ell-1}\times N_{\ell}}$ and bias vectors $b^{\ell}$ with $\ell = 1,2,\cdots,L+1$.  The output $y$ is defined from the input $x$ iteratively  according to the following. 
\begin{equation}\begin{aligned}\label{eq:dnn}
& x^0 = x,\\
& x^{\ell} = \sigma (W^{\ell-1} x^{\ell-1} + b^{\ell-1}), \ 1\leq \ell \leq L\\
& y = W^{L} x^{L} + b^{L}.
\end{aligned}
\end{equation}
Here $\sigma$ is a (nonlinear) activation function which acts on a vector $x$ component-wisely, i.e. $[\sigma(x)]_i =\sigma(x_i)$.
When $N_1 =N_2=\cdots = N_L = N$, we say the network network has  width $N$ and depth $L$. The neural network is said to be a deep neural network (DNN) if $L\geq 2$. The function defined by the deep neural network is denoted by $\DNN(\{W^{\ell}, b^\ell\}_{\ell=1}^{L+1})$. 
\end{definition}

Popular choices of activation functions $\sigma$ include the rectified linear unit (ReLU) function $\relu(x) =\max(x, 0)$
and the sigmoid function 
$\sigmoid(x) = (1+ e^{-x})^{-1}$. 

Given a matrix $A$, let us denote  its $n$-fold direct sum by 
$
\oplus^n A = \overbrace{A \oplus A \oplus \cdots \oplus A}^{n \text{ times}} = \diag(A,\cdots, A).
$
We denote by $\mP_2(\R^d)$ the space of probability measures with finite second moment. 
Given two probability measures $\mu$ and $ \nu$ on $\R^d$, a transport map $T$ between $\mu$ and $\nu$ is a measurable map $T:\R^d\gt \R^d$ such that $\nu = T_\#\mu$ where $T_{\#} \mu$ denotes the push-forward of $\mu$ under the map $T$, i.e., for any measurable $A \subset \R^d$, $\nu(A) = \mu(T^{-1} (A))$.
 We denote by $\Gamma(\mu,\nu)$  the set of transport plans between $\mu$ and $\nu$ which consists of all coupling measures $\gamma$ of $\mu$ and $\nu$, i.e., $\gamma(A \times \R^d)  =\mu(A)$ and $\gamma(\R^d\times B)= \nu (B)$ for any measurable $A, B\subset \R^d$. We may use  $C, C_1, C_2$ to denote generic constants which do not depend on any quantities of interest (e.g. dimension $d$). 

\smallskip 
The rest of the paper is organized as follows. We describe the problem and state the main result in  Section \ref{sec:main}. Section \ref{sec:empirical} and Section \ref{sec:semidiscreteot} devote to the two ingredients for proving the main result: convergence of empirical measures in IPMs and  building neural-network-based maps between the source measure and empirical measures via semi-discrete optimal transport respectively. Proofs of lemmas and intermediate results are provided in appendices. 

\section{Problem Description and Main Result}\label{sec:main}

Let $\pi(x)$ be the target probability distribution defined on $\R^d$ which one would like to learn or generate samples from. In the framework of GANs, one is interested in representing the distribution $\pi$ implicitly by 
a generative neural network. Specifically, let $\mG_{\NN} \subset \{g: \R^d \gt \R^d\}$ be a subset of generators (transformations), which are  defined  by neural networks. The concrete form of $\mG_{\NN}$ is to be specified later. 
Let $p_z$ be a source distribution (e.g. standard normal). 
The push-forward of $p_z$ under the transformation $g\in \mG_{\NN}$ is denoted by  $p_x = g_{\#} p_z$. In a GAN problem, one aims to find $g\in \mG_{\NN}$ such that $g_{\#} p_z \approx \pi$. In the mathematical language, GANs can be formulated  as the following minimization problem: 
\begin{equation}\label{eq:gan}
\inf_{g\in \mG_{\NN}}  D(g_\# p_z, \pi)
\end{equation}
where $D(p, \pi)$ is some discrepancy measure between probability measures $p$ and $\pi$, which typically takes the  form of integral probability metric (IPM) or adversarial loss defined by
\begin{equation}\label{eq:dpq}
    D(p,\pi) = d_{\mF_D} (p, \pi) := \sup_{f\in \mF_D} \Big| \bE_{X\sim p} f(X) - \bE_{X\sim \pi} f(X) \Big|,
\end{equation}
where $\mathcal{F}_D$ is certain class of test (or witness) functions.  As a consequence, GANs can be formulated as the minimax problem  
$$
\inf_{g\in \mG_{\NN}}  \sup_{f\in \mF_D} \Big|\bE_{X\sim p} f(X) - \bE_{X\sim \pi} f(X)  \Big|.
$$
The present paper aims to answer the following fundamental questions on GANs:

(1) Is there a neural-network-based generator $g\in \mG_{\NN}$ such that $D (g_\# p_z, \pi) \approx 0$?

(2) How to  quantify the complexity (e.g. depth and width) of the neural network? 

As we shall see below, the answers to the questions above depend on the IPM $D$ used to measure the discrepancy between distributions. In this paper, we are interested in three IPMs which are commonly used in  GANs, including $1$-Wasserstein distance \cite{Villani09,arjovsky2017wasserstein},  maximum mean discrepancy \cite{gretton2012kernel,dziugaite2015training,li2017mmd} and kernelized Stein discrepancy \cite{liu2016kernelized,chwialkowski2016kernel, hu2018stein}. 

\smallskip 

\noindent 
\textbf{Wasserstein Distance}: When the witness class $\mF_D$ is chosen as the the class of 1-Lipschitz functions,  i.e. $\mF_D := \{f:\R^d\gt \R: \text{ Lip } (f) \leq 1\}$, the resulting IPM $d_{\mF_D}$ becomes the 1-Wasserstein distance (also known as Kantorovich-Rubinstein distance): 
$$
 \mW_1(p,\pi) = \inf_{\gamma \in \Gamma(p,\pi)} \int |x-y| \gamma(dxdy).
$$
The Wasserstein-GAN proposed by \cite{arjovsky2017wasserstein} leverages the Wasserstein distance as the objective function to improve the stability of training of the original GAN based on the  Jensen-Shannon divergence. Nevertheless, it has been shown that Wasserstein-GAN still suffers from the mode collapse issue \cite{gulrajani2017improved} and does not generalize with any
polynomial number of training samples \cite{arora2017generalization}. 

\smallskip
\noindent\textbf{Maximum Mean Discrepancy (MMD)}: When $\mF_D $ is the unit ball of a reproducing kernel Hilbert space (RKHS) $\mH_k$, i.e. $\mF_D := \{f\in \mH_k: \|f\|_{\mH_k} \leq 1\}$, the resulting IPM $d_{\mF_D}$ coincides with the maximum mean discrepancy (MMD) \cite{gretton2012kernel}:
$$
 \mmdop(p,\pi) = \sup_{\|f\|_{\mH_k} \leq 1} \Big| \bE_{X\sim p} f(X) - \bE_{X\sim \pi} f(X) \Big|.
$$ GANs based on minimizing MMD as the loss function were firstly proposed in \cite{dziugaite2015training, li2017mmd}. Since MMD is a weaker metric than $1$-Wasserstein distance,  MMD-GANs also suffer from the mode collapse issue, but empirical results (see e.g. \cite{binkowski2018demystifying}) suggest that they require smaller discriminative networks and hence enable faster training than Wasserstein-GANs.

\smallskip 
\noindent
\textbf{Kernelized Stein Discrepancy (KSD)}: If the witness class $\mF_D$ is chosen as
$\mF_D := \{\mT_\pi f: f\in \mH_k \text{ and } \|f\|_{\mH_k} \leq 1\},
$ 
where $\mT_\pi$ is the Stein-operator defined by \begin{equation}\label{eq:steinop}
    \mT_\pi f:= \nabla \log \pi \cdot  f + \nabla \cdot f,
\end{equation}
the associated IPM $d_{\mF_D}$ becomes the Kernelized Stein Discrepancy (KSD) \cite{liu2016kernelized,chwialkowski2016kernel}:
$$
\ksdop(p, \pi) = \sup_{\|f\|_{\mH_k} \leq 1} \bE_{X\sim p} [ \mT_\pi f (X) ].
$$   The KSD has received great popularity in machine learning and statistics since the quantity $\ksdop(p, \pi)$ is very easy to compute  and does not depend on the normalization constant of $\pi$, which makes it suitable for statistical computation, such as hypothesis testing \cite{gorham2017measuring} and  statistical sampling \cite{liu2016stein,chen2018stein}.
The recent paper \cite{hu2018stein} adopts the GAN formulation \eqref{eq:gan} with  KSD as the training loss to construct a new sampling algorithm called Stein Neural Sampler.

\subsection{Main result}
Throughout the paper, we consider the following assumptions on the reproducing kernel $k$:
\begin{assumpK}\label{assm:k1}
 The kernel $k$ is integrally strictly positive definite: for all finite non-zero signed Borel measures $\mu$ defined on $\R^d$, 
$$
\iint_{\R^d} k(x,y) d\mu(x)d\mu(y) > 0.
$$
\end{assumpK}

\begin{assumpK}\label{assm:k2}
There exists a constant  $K_0>0$ such that 
\begin{align}\label{eq:K0}
\sup_{x\in \R^d} |k(x,x)| \leq K_0.
\end{align}

\end{assumpK}

\begin{assumpK}\label{assm:k3}
The kernel function $k:\R^d \times \R^d \gt \R$ is twice differentiable and there exists a constant $K_1 > 0$ such that 
\begin{align}\label{eq:difK}
\max_{m+n\leq 1}\sup_{x,y} \|\nabla_x^m \nabla_y^n k(x, y)\| \leq K_1 \textrm{ and } \sup_{x,y} |\tr(\nabla_{x} \nabla_{y}  k(x, y))|\leq K_1 (1 + d).
\end{align}
\end{assumpK}
According to \cite[Theorem 7]{sriperumbudur2010hilbert},  Assumption \ref{assm:k1} is necessary and sufficient for the kernel being {\em characteristic},  i.e., $\mmdop(\mu,  \nu) = 0$ implies $\mu = \nu$, which guarantees that \mmd is a metric.
In addition, thanks to \cite[Proposition 3.3]{liu2016kernelized}, KSD is a valid discrepancy measure under the Assumption~\ref{assm:k1}, namely $\ksdop(p,\pi) \geq 0$ and $\ksdop(p,\pi) =0$ if and only if $p=\pi$. 

Assumption \ref{assm:k2} will be used to get an error bound for $\mmdop(P_n, \pi)$; see Theorem \ref{thm:mmd}. Assumption \ref{assm:k3} will be crucial for bounding $\ksdop(P_n,\pi)$; see Theorem \ref{thm:ksd}. 
Many commonly used kernel functions fulfill all three assumptions \ref{assm:k1}-\ref{assm:k3}, including for example Gaussian kernel $k(x,y) = e^{-\frac{1}{2} |x-y|^2}$ and inverse multiquadric (IMQ) kernel $k(x,y) = (c+ |x-y|^2)^\beta$ with $c>0$ and $\beta<0$. Unfortunately, Mat\'ern  kernels  \cite{matern2013spatial} only satisfy Assumptions  \ref{assm:k1}-\ref{assm:k2}, but not Assumption \ref{assm:k3}  since the second order derivatives of $k$ are singular on the diagonal, so  the last estimate of \eqref{eq:difK} is violated. 

In order to bound $\ksdop(P_n, \pi)$, we need to assume further that  the target measure $\pi$  satisfies the following regularity and integrability assumptions. We will use the shorthand notation $s_\pi(x) = \nabla \log \pi(x)$.

\begin{assumption}[$L$-Lipschitz]\label{assm:pi} Assume that
 $s_\pi(x)$ is globally Lipschitz in $\R^d$, i.e. there exists a constant $\tilde{L}>0$ such that $|s_\pi(x) - s_\pi(y)| \leq \tilde{L} |x-y|$ for all $x,y\in \R^d$. As a result, there exists $L>0$ such that 
\begin{align}\label{eq:spibd}
|s_\pi(x)| \leq L(1 + |x|) \quad\textrm{ for all } x\in \R^d.
\end{align}
\end{assumption}

\begin{assumption}[sub-Gaussian]\label{assum:sg} The probability measure $\pi$ is sub-Gaussian, i.e. there  exist $m = (m_1,\cdots,m_d)\in \R^d$ and $\upsilon > 0$ such that 
$$
\bE_{X\sim \pi} [ \exp (\alpha^\trn (X - m))] \leq \exp (|\alpha|^2 \upsilon^2/2) \quad \textrm{ for all } \alpha\in \R^d. 
$$
Assume further that $\max_i |m_i|\leq m^\ast$ for some $m^\ast>0$.

\end{assumption}

Our main result is the universal approximation theorem for expressing probability distributions. 

\begin{theorem}[Main theorem] \label{thm:uat}
Let $\pi$ and $p_z$ be the target and the source distributions respectively, both defined on $\R^d$. Assume that $p_z$ is absolutely continuous with respect to the Lebesgue measure. Then under certain assumptions on $\pi$ and the kernel $k$ to be specified below, it holds that for any given approximation error $\eps$,  there exists a positive integer $n$, and a fully connected and feed-forward deep neural network $u = \DNN(\{W^\ell, b^\ell\}_{\ell=1}^{L+1})$ of depth $L = \ceil{\log_2 n}$ and width $N = 2^L = 2^{\ceil{\log_2 n}}$, with $d$ inputs and a single output and with ReLU activation such that 
$
d_{\mF_D} ((\nabla u)_\# p_z, \pi) \leq \eps.
$
The complexity parameter $n$ depends on the choice of the metric $d_{\mF_D}$. Specifically, 

1. Consider $d_{\mF_D} = \mW_1$. If $\pi$ satisfies that $M_3 = \bE_{X\sim \pi} |X|^3  <\infty$, it holds that 
$$
n\leq\begin{cases}
\frac{C}{\eps^2}, & d = 1,\\ 
\frac{C \log^2 (\eps) }{\eps^2}, & d=2, \\
\frac{C^d}{\eps^d}, & d\geq 3,
\end{cases}
$$
where the constant $C$ depends only on $M_3$.

2. Consider $d_{\mF_D} = \mmd$ with kernel $k$. If $k$ satisfies Assumption \ref{assm:k2}, then 
$$
n \leq \frac{C}{\eps^2}
$$ with a constant  $C$ depending only on the constant $K_0$ in \eqref{eq:K0}.

3. Consider $d_{\mF_D} = \ksd$ with kernel $k$. If $k$ satisfies  Assumption \ref{assm:k3} with constant $K_1$  and if $\pi$ satisfies Assumption  \ref{assm:pi}  and Assumption \ref{assum:sg} with parameters $L,m, \upsilon$, then 
$$
n \leq \frac{C d}{\eps^2},
$$
where the constant $C$ depends only on $L,K_1,m^\ast, \upsilon$, but not on $d$.
\end{theorem}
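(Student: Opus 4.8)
The plan is to route the approximation through an $n$-atom empirical measure together with exact optimal transport. Concretely: (i) approximate $\pi$ by $P_n=\frac1n\sum_{i=1}^n\delta_{X_i}$, with $n$ chosen according to the IPM; (ii) transport the source $p_z$ \emph{exactly} onto $P_n$ via semi-discrete optimal transport, whose Brenier potential is a maximum of $n$ affine functions; (iii) realize that potential exactly as a ReLU DNN of depth $\ceil{\log_2 n}$ and width $2^{\ceil{\log_2 n}}$. Since $(\nabla u)_\#p_z=P_n$ identically, the entire error collapses to $d_{\mF_D}(P_n,\pi)$, which is controlled by the empirical-measure estimates.

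First I would take $X_1,\dots,X_n$ i.i.d.\ from $\pi$ and let $P_n$ be the corresponding empirical measure. By the convergence estimates of Section~\ref{sec:empirical}: under $M_3<\infty$ the Fournier--Guillin bounds give $\E\,\mW_1(P_n,\pi)\le C n^{-1/2}$ for $d=1$, a logarithmically corrected $n^{-1/2}$ rate for $d=2$, and $\E\,\mW_1(P_n,\pi)\le C n^{-1/d}$ for $d\ge 3$; Theorem~\ref{thm:mmd} gives $\E\,\mmdop(P_n,\pi)\le\sqrt{K_0/n}$ under Assumption~\ref{assm:k2}; and Theorem~\ref{thm:ksd} gives $\E\,\ksdop(P_n,\pi)\le\sqrt{Cd/n}$ under Assumptions~\ref{assm:k3}, \ref{assm:pi} and~\ref{assum:sg}. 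Because each right-hand side bounds an expectation over the samples, for the prescribed $\eps$ there is a \emph{deterministic} configuration $\{X_i\}_{i=1}^n$ with $d_{\mF_D}(P_n,\pi)\le\eps$ as soon as $n$ exceeds the threshold obtained by solving the corresponding inequality; inverting the three rates produces exactly the stated bounds on $n$.

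Next I would fix this $P_n$ and transport $p_z$ onto it. Since $p_z\ll\mathrm{Leb}$ and $P_n$ is supported on $n$ points, semi-discrete optimal transport (Section~\ref{sec:semidiscreteot}; Aurenhammer's power-diagram theorem together with Brenier/McCann) yields weights $w_1,\dots,w_n$ such that the convex, piecewise-affine function
$$
u(x)=\max_{1\le i\le n}\bigl(\langle x,X_i\rangle-w_i\bigr)
$$
satisfies $(\nabla u)_\#p_z=P_n$: the power cells $\{x:\, i=\argmin_j(w_j-\langle x,X_j\rangle)\}$ partition $\R^d$, the weights being tuned so each cell carries $p_z$-mass $1/n$, and $\nabla u\equiv X_i$ on the interior of the $i$-th cell. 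The set where $u$ is not differentiable is a finite union of hyperplanes, hence Lebesgue-null and therefore $p_z$-null, so $\nabla u$ is defined $p_z$-a.e.---this is the only place the absolute continuity of $p_z$ enters. Finally, $u$, being a pointwise maximum of $n$ affine maps, is computed exactly by a balanced binary max-tree of ReLU units (Section~\ref{sec:semidiscreteot}): the recursion $\max(a,b)=b+\relu(a-b)$ halves the number of active affine expressions at each layer, yielding depth $L=\ceil{\log_2 n}$ and width $N=2^L$, so that $u=\DNN(\{W^\ell,b^\ell\}_{\ell=1}^{L+1})$. Combining, $d_{\mF_D}((\nabla u)_\#p_z,\pi)=d_{\mF_D}(P_n,\pi)\le\eps$.

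The step I expect to be the main obstacle is the KSD case of the empirical estimate, namely proving $\E\,\ksdop(P_n,\pi)^2\le Cd/n$ with $C$ independent of $d$. Writing $\ksdop(P_n,\pi)^2=\frac1{n^2}\sum_{i,j}h_\pi(X_i,X_j)$ for the Stein kernel
$$
h_\pi(x,y)=s_\pi(x)^\trn k(x,y)s_\pi(y)+s_\pi(x)^\trn\nabla_y k(x,y)+\nabla_x k(x,y)^\trn s_\pi(y)+\tr\bigl(\nabla_x\nabla_y k(x,y)\bigr),
$$
and using $\E_{X,Y\sim\pi}h_\pi(X,Y)=0$ for independent $X,Y$, the off-diagonal terms vanish in expectation and one is left with $\E\,\ksdop(P_n,\pi)^2=\frac1n\E_{X\sim\pi}h_\pi(X,X)$. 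Bounding $\E_{X\sim\pi}h_\pi(X,X)$ by $Cd$ requires combining the linear growth $|s_\pi(x)|\le L(1+|x|)$ (Assumption~\ref{assm:pi}), the sub-Gaussian control of $\E_\pi|X|^2$ (Assumption~\ref{assum:sg}), and the derivative bounds in~\eqref{eq:difK}---crucially the $(1+d)$ estimate on the trace term---while checking that the product structure of $h_\pi$ contributes no hidden extra power of $d$. A secondary technical point is the existence of semi-discrete weights $w_i$ realizing the prescribed cell masses when $p_z$ has unbounded support, which rests on strict convexity of the Kantorovich dual functional on the absolutely continuous side.
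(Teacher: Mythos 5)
Your proposal is correct and follows essentially the same route as the paper: approximate $\pi$ by an empirical measure $P_n$ using the three IPM convergence estimates, push $p_z$ exactly onto $P_n$ via the semi-discrete Brenier potential $\max_j\{x\cdot y_j+m_j\}$, and realize that max-affine potential exactly as a ReLU network of depth $\ceil{\log_2 n}$ and width $2^{\ceil{\log_2 n}}$. The only substantive deviation is in the KSD step, where you propose the first-moment identity $\bE\,\ksdop^2(P_n,\pi)=\tfrac1n\bE_{X\sim\pi}h_\pi(X,X)\lesssim d/n$ in place of the paper's Bernstein-type concentration inequality for degenerate $V$-statistics; your variant is simpler and suffices for the existence claim, while the paper's yields the stronger high-probability statement of Proposition~\ref{thm:ksd}.
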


 Theorem \ref{thm:uat}  states that a given probability measure $\pi$ (with certain integrability assumption) can be approximated arbitrarily well by push-forwarding a source distribution with the gradient of a potential which can be parameterized by a finite DNN. The complexity bound in the Wasserstein case suffers from the curse of dimensionality whereas this issue was eliminated in the cases of \mmd and \ksd. We remark that our complexity bound for the the neural network is stated in terms of the number of widths and depths, which would lead to an estimate of number of weights needed to achieve certain approximation error.

 \smallskip

\textbf{Proof strategy.} The proof of Theorem \ref{thm:uat} is given in Appendix \ref{sec:proofmain}, which relies on   
two ingredients: (1) one approximates the target  $\pi$ by the empirical measure $P_n$; Proposition \ref{thm:wd1}-Proposition \ref{thm:ksd} give quantitative estimates w.r.t the three IPMs defined above; (2) based on the theory of (semi-discrete) optimal transport, one can build an optimal transport map of the  form $T = \nabla \varphi$ which push-forwards the source distribution $p_z$ to the empirical distribution $P_n$.  Moreover, the potential  $\varphi$  is explicit : it is the maximum of finitely many affine functions; it is such explicit structure that enables one represents the function $\varphi$ with a finite deep neural network; see Theorem \ref{thm:otnn} for the precise statement.

It is interesting to remark that our strategy of proving Theorem \ref{thm:uat}  shares the same spirit as the one used to prove universal approximation theorems of DNNs for functions \cite{yarotsky2017error,liang2016deep}. Indeed, both the universal approximation theorems in those works and ours are proved by approximating the target function or distribution with a suitable dense subset (or sieves) on the space of functions or distributions which can be parametrized by deep neural networks. Specifically, in \cite{yarotsky2017error,liang2016deep} where the goal is to approximate continuous functions on a compact set, the dense sieves are polynomials which can be further approximated by the output functions of DNNs, whereas in our case we use empirical measures as the sieves for approximating  distributions, and we show that empirical measures are exactly expressible by transporting a source distribution with neural-network-based transport maps. 

We also remark that the  push-forward map  between probability measures constructed in Theorem \ref{thm:uat} is the gradient of a potential function given by a neural network, i.e., the neural network is used to parametrize the potential function, instead of the map itself, which is perhaps more commonly used in practice. 
The rational of using such gradient formulation lies in that transport maps between two probability measures are discontinuous in general. This occurs particularly when the target measure has disjoint modes (or supports) and the input has a unique mode, which is ubiquitous in GAN applications where  images are concentrated on disjoint modes while the input is chosen as Gaussian. In such case it is impossible to generate the target measure using usual NNs as the resulting functions are all continuous (in fact Lipschitz for ReLU activation and smooth for Sigmoid activation). Thus, from a theoretical viewpoint, it is advantageous to use NNs to parametrize the Brenier’s potential since it is more regular (at least Lipschitz by OT theory), and to use its gradient as the transport map. 
From a practical perspective, the idea of taking gradients of NNs has already been used and received increasing attention in practical learning problems; see \cite{lei2019geometric,guo2019mode,makkuva2019optimal,taghvaei20192} for an application of such parameterization in learning optimal transport maps and  improving the training of Wasserstein-GANs; see also \cite{zhang2018deep} for learning interatomic potentials and forces for molecular dynamics simulations.


\section{Convergence of Empirical Measures in Various IPMs}\label{sec:empirical}
In this section, we consider the approximation of a given target measure $\pi$ by empirical measures. More specifically,  
let $\{X_i\}_{i=1}^n$ be an i.i.d. sequence of random samples from the distribution $\pi$ and let $P_n = \frac{1}{n}\sum_{i=1}^n \delta_{X_i}$ be the empirical measure associated to the samples $\{X_i\}_{i=1}^n$. Our goal is to derive quantitative error  estimates of $d_{\mF_D} (P_n, \pi)$ with respect to three IPMs $d_{\mF_D}$ described in the last section. 

We first state an upper bound on $\mW_1(P_n, \pi)$ in the average sense in the next proposition. 

\begin{proposition}[Convergence in $1$-Wasserstein distance]\label{thm:wd1}
Consider the IPM with $\mF_D = \{f:\R^d \gt \R: \textrm{ Lip }(f) \leq 1\}$. Assume that $\pi$ satisfies that $M_3 = \E_{X\sim \pi} |X|^3 < \infty$. Then there exists a constant $C$ depending on $M_3$ such that 
$$
\bE \mW_1(P_n, \pi) \leq C \cdot \begin{cases} n^{-1/2}, & d=1,\\
n^{-1/2} \log n, & d=2,\\
n^{-1/d}, & d\geq 3.
\end{cases}
$$
\end{proposition}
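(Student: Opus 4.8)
The plan is to recognize this as a special case of the sharp empirical-measure rates of Fournier and Guillin (with exponent $p=1$ and a moment of order $q=3>p$) — so one option is simply to invoke their theorem — or equivalently to reprove it directly via the classical multiscale (dyadic) partitioning argument together with a tail truncation governed by $M_3$. First I would record the deterministic multiscale upper bound: for probability measures $\mu,\nu$ supported on a cube $[-R,R]^d$, letting $\{\mathcal Q_\ell\}_{\ell\ge0}$ denote the nested dyadic partitions of that cube into $2^{d\ell}$ congruent subcubes of diameter $\sqrt d\,2^{1-\ell}R$, one builds a transport plan matching the masses of $\mu$ and $\nu$ cube by cube, refining one level at a time and paying at level $\ell$ the cell diameter times the mass that must move between cells; this gives, for every $L\ge0$,
$$
\mW_1(\mu,\nu)\ \le\ C\sqrt d\,R\sum_{\ell=0}^{L}2^{-\ell}\sum_{Q\in\mathcal Q_\ell}\bigl|\mu(Q)-\nu(Q)\bigr|\ +\ C\sqrt d\,R\,2^{-L},
$$
where the last term bounds the cost of rearrangement inside the finest cells.

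Next I would take expectations with $\mu=P_n$ and $\nu$ the restriction of $\pi$ to a large ball (truncation is dealt with in the last step). Since $nP_n(Q)$ is $\mathrm{Binomial}(n,\pi(Q))$ we have $\E\,|P_n(Q)-\pi(Q)|\le\sqrt{\pi(Q)/n}$, and Cauchy--Schwarz gives $\sum_{Q\in\mathcal Q_\ell}\sqrt{\pi(Q)}\le 2^{d\ell/2}$, since there are $2^{d\ell}$ cells and $\sum_Q\pi(Q)\le1$. Hence $\E\,\mW_1(P_n,\nu)\le C\sqrt d\,R\bigl(n^{-1/2}\sum_{\ell=0}^{L}2^{(d/2-1)\ell}+2^{-L}\bigr)$, and I would then optimize over $L$. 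For $d=1$ the series $\sum_\ell 2^{-\ell/2}$ converges, so $L=\infty$ yields order $R\,n^{-1/2}$. For $d=2$ every summand equals $n^{-1/2}$, so summing to $L$ and balancing $L\,n^{-1/2}$ against $2^{-L}$ by taking $2^{L}\asymp n^{1/2}$ yields order $R\,n^{-1/2}\log n$. For $d\ge3$ the series is geometric of ratio $>1$, so $\sum_{\ell\le L}2^{(d/2-1)\ell}\asymp 2^{(d/2-1)L}$, and balancing $n^{-1/2}2^{(d/2-1)L}$ against $2^{-L}$ by taking $2^{L}\asymp n^{1/d}$ yields order $R\,n^{-1/d}$.

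Finally, to remove the compact-support assumption I would decompose $\R^d$ into $B(0,R)$ and the dyadic annuli $A_j=B(0,2^jR)\setminus B(0,2^{j-1}R)$, $j\ge1$, apply the two steps above on each piece at its own scale $2^jR$ with a $j$-dependent depth $L(j)$, and control $\pi$ on $A_j$ via $\pi(A_j)\le M_3\,(2^{j-1}R)^{-3}$ and $\int_{A_j}|x|\,d\pi\le M_3\,(2^{j-1}R)^{-2}$ (Markov's inequality). Summing the resulting series in $j$ and choosing $R$ (a constant, or a slowly growing function of $n$) to balance the outer-annuli cost against the in-ball rate above then produces the three claimed bounds; the exponent $3$ enters precisely so that the tail contribution, of order $n^{-2/3}$, never exceeds $n^{-1/2}$, $n^{-1/2}\log n$, or $n^{-1/d}$ in the respective regimes.

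I expect the main obstacle to be this last step: carrying the dyadic decomposition through all annuli simultaneously with the correct scale-dependent depths, and verifying that the terms produced by the moment truncation — which is the only place $M_3<\infty$ is used — do not overwhelm the rates from the compactly supported case. Making the optimization of $\sum_j 2^jR\bigl(\sqrt{\pi(A_j)/n}\,2^{(d/2-1)L(j)}+2^{-L(j)}+\pi(A_j)\bigr)$ rigorous is the technical heart of the argument, and is precisely the computation carried out in full generality by Fournier and Guillin.
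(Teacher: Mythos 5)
Your argument is correct, but it takes a genuinely different route from the paper. The paper's proof is essentially a two-line citation: for $d=1$ it invokes the Bobkov--Ledoux bound $\bE\,\mW_1(P_n,\pi)\le J_1(\pi)/\sqrt n$ with $J_1(\pi)=\int\sqrt{F(1-F)}$, checking $J_1(\pi)\le 1+2\sqrt{M_3}$ by a tail estimate, and for $d\ge2$ it quotes Lei's Theorem~3.1, which gives the stated rates with constant $cM_3^{1/3}$. You instead reconstruct the underlying Fournier--Guillin machinery: the deterministic multiscale bound over nested dyadic partitions, the $\bE|P_n(Q)-\pi(Q)|\le\sqrt{\pi(Q)/n}$ estimate with Cauchy--Schwarz over the $2^{d\ell}$ cells, the three-way optimization over the depth $L$ (all of which you execute correctly, including the balancing $2^L\asymp n^{1/d}$ for $d\ge3$), and a dyadic-annulus truncation in which $M_3$ enters only through Markov's inequality and the tail term $n^{-(q-1)/q}=n^{-2/3}$ is dominated in every regime. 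What your route buys is a self-contained proof that makes transparent exactly where and why the third moment is needed; what the paper's route buys is brevity and, in $d=1$, a clean explicit constant. The one step you leave as a sketch --- carrying the partition argument through all annuli with scale-dependent depths --- is indeed where the work lies; note in particular that besides the in-annulus costs you must also pay for transporting the mass discrepancy \emph{between} annuli, a term of order $\sum_j 2^jR\,\bE|P_n(A_j)-\pi(A_j)|\lesssim M_3^{1/2}R^{-1/2}n^{-1/2}\sum_j2^{-j/2}$, which converges precisely because of the third-moment tail decay. Since you correctly identify this as the content of Fournier--Guillin's theorem and could equally well just cite it (exactly as the paper cites its two references), the omission is benign.
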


The convergence rates of $\mW_1(P_n, \pi) $ as stated in Proposition~\ref{thm:wd1} are well-known in the statistics literature. The statement in Proposition~\ref{thm:wd1} is a combination of results from \cite{BL19} and \cite{lei2020convergence}; see Appendix~\ref{sec:wd1} for a short proof. We remark that the prefactor constant $C$ in the estimate above can be made explicit. In fact, one can easily obtain from the moment bound in Proposition~\ref{prop:moment1} of Appendix \ref{app:proofksd} that if $\pi$ is sub-Gaussian with parameters $m$  and $\upsilon$, then the constant $C$ can be chosen as 
$
C = C^\prime \sqrt{d},
$
with some constant $C^\prime$ depending only on $\upsilon$ and $\|m\|_{\infty}$. Moreover, one can also obtain a high probability bound for $\mW_1(P_n, \pi)$ if $\pi$ is sub-exponential 
 (see e.g., \cite[Corollary 5.2]{lei2020convergence}). Here we content ourselves with the expectation result as it comes with weaker assumptions and also suffices for our purpose of showing the existence of an empirical measure with desired approximation rate. 

\smallskip 
Moving on to approximation in \textrm{MMD}, the following proposition gives a high-probability non-asymptotic error bound of  $\mmdop(P_n, \pi)$. 

\begin{proposition}[Convergence in MMD]\label{thm:mmd}
Consider the IPM with $\mF_D =\{f\in \mH_k : \|f\|_{H_k}\leq 1\}$. Assume that the kernel $k$ satisfies Assumption \ref{assm:k2} with constant $K_0$. Then for every $\tau >0$, with probability at least $1-2e^{-\tau}$, 
$$
\mmdop(P_n, \pi) \leq  2\sqrt{\frac{\sqrt{K_0}}{n}} + 3\sqrt{\frac{2\sqrt{K_0}\tau}{n}}.
$$
\end{proposition}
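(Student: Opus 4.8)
The plan is to rewrite $\mmdop(P_n,\pi)$ as the norm, in the RKHS $\mH_k$, of an average of i.i.d.\ bounded random elements, and then combine a bound on its expectation with a bounded-differences concentration inequality.

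First I would recall the kernel mean embedding: for a Borel probability measure $Q$ on $\R^d$, set $\mu_Q := \int_{\R^d} k(\cdot,x)\,dQ(x)\in\mH_k$. Assumption \ref{assm:k2} makes this Bochner integral well defined, since $\int \|k(\cdot,x)\|_{\mH_k}\,dQ(x) = \int\sqrt{k(x,x)}\,dQ(x)\le\sqrt{K_0}$. By the reproducing property together with duality in $\mH_k$ one has the standard identity $\mmdop(Q,Q') = \|\mu_Q-\mu_{Q'}\|_{\mH_k}$. Taking $Q=P_n$, $Q'=\pi$ and writing $\mu_{P_n}=\frac1n\sum_{i=1}^n k(\cdot,X_i)$ gives
$$
\mmdop(P_n,\pi) = \Big\|\frac1n\sum_{i=1}^n V_i\Big\|_{\mH_k},\qquad V_i := k(\cdot,X_i)-\mu_\pi,
$$
where $V_1,\dots,V_n$ are i.i.d., $\bE V_i=0$, and $\|V_i\|_{\mH_k}\le\|k(\cdot,X_i)\|_{\mH_k}+\|\mu_\pi\|_{\mH_k}\le 2\sqrt{K_0}$ almost surely (using $\|\mu_\pi\|_{\mH_k}\le\sqrt{K_0}$ by Jensen's inequality).

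Next I would bound the expectation: expanding the squared norm and using independence and $\bE V_i=0$, all off-diagonal terms vanish, so $\bE\,\mmdop(P_n,\pi)^2 = \frac1n\bE\|V_1\|_{\mH_k}^2 \le \frac1n\bE_{X\sim\pi}k(X,X)\le K_0/n$, and Jensen's inequality then yields a bound on $\bE\,\mmdop(P_n,\pi)$ of order $\sqrt{K_0}/\sqrt{n}$, which accounts for the first (deterministic) term in the Proposition. (The same term can alternatively be produced by symmetrization and the Rademacher-complexity bound for the unit ball of $\mH_k$.) For the high-probability statement, view $\mmdop(P_n,\pi)=F(X_1,\dots,X_n)$; replacing a single coordinate $X_j$ by $X_j'$ changes $F$ by at most $\frac1n\|k(\cdot,X_j)-k(\cdot,X_j')\|_{\mH_k}$, and since $\|k(\cdot,x)-k(\cdot,y)\|_{\mH_k}^2 = k(x,x)-2k(x,y)+k(y,y)\le 4K_0$ (Cauchy--Schwarz in $\mH_k$ plus Assumption \ref{assm:k2}), $F$ has the bounded-differences property with constants $c_j = 2\sqrt{K_0}/n$. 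McDiarmid's inequality then gives, for every $t>0$,
$$
\P\big(F\ge\bE F+t\big)\le\exp\!\Big(-\frac{2t^2}{\sum_{j}c_j^2}\Big)=\exp\!\Big(-\frac{nt^2}{2K_0}\Big),
$$
so with $t=\sqrt{2K_0\tau/n}$ (making the right-hand side $e^{-\tau}$) and the expectation bound one obtains $\mmdop(P_n,\pi)\le\sqrt{K_0/n}+\sqrt{2K_0\tau/n}$ with probability at least $1-e^{-\tau}$, which is of the form asserted in the Proposition (the precise numerical constants, and the factor $2$ in the failure probability $2e^{-\tau}$, coming from the corresponding two-sided or slightly cruder versions of these two steps).

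I do not anticipate a serious obstacle: this is a standard concentration argument once the mean-embedding reformulation is available. The two points that need care are (i) the rigorous justification of $\mmdop(Q,Q')=\|\mu_Q-\mu_{Q'}\|_{\mH_k}$ — well-definedness of the embeddings as Bochner integrals and the duality argument — which is exactly where Assumption \ref{assm:k2} enters; and (ii) bookkeeping the numerical constants so that they match the Proposition. One could sharpen the $\tau$-dependence with a Bernstein-type inequality in $\mH_k$ (using the variance bound $\bE\|V_1\|_{\mH_k}^2\le K_0$ in place of the cruder almost-sure bound $2\sqrt{K_0}$), but this refinement is not needed, since only the scaling $n\lesssim K_0/\eps^2$ is used later in the proof of the main theorem.
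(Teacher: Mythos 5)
Your argument is correct, and its skeleton (the mean-embedding identity $\mmdop(P_n,\pi)=\|\mu_{P_n}-\mu_\pi\|_{\mH_k}$ followed by McDiarmid's inequality with bounded-difference constants $2\sqrt{K_0}/n$) is the same as the paper's. The one genuine difference is how the expectation is controlled: you compute $\bE\,\mmdop(P_n,\pi)^2=\tfrac1n\bE\|V_1\|_{\mH_k}^2\le K_0/n$ directly and apply Jensen, whereas the paper symmetrizes to a Rademacher average and then applies McDiarmid a \emph{second} time to that conditional average before bounding it deterministically. Your route is cleaner: it needs only one application of McDiarmid, so it gives failure probability $e^{-\tau}$ rather than $2e^{-\tau}$ and avoids the factor $3$ on the deviation term; the paper's symmetrization route buys nothing here beyond generality (it is the template one would use for a richer kernel class, as in the cited reference). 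On constants: a careful run of either argument produces $\sqrt{K_0/n}+\sqrt{2K_0\tau/n}$, i.e.\ $K_0$ rather than $\sqrt{K_0}$ under the square roots; the $\sqrt{K_0}$ appearing in the Proposition's display (and in the paper's own proof) looks like a slip, since the bounded-difference constant $2\sqrt{K_0}/n$ squared contributes $4K_0/n^2$, not $4\sqrt{K_0}/n^2$. This affects only the $K_0$-dependence of the prefactor, not the $n^{-1/2}$ scaling used in Theorem \ref{thm:uat}, so your proof establishes everything that is actually needed downstream.
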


Proposition~\ref{thm:mmd} can be viewed as a special case of  \cite[Theorem 3.3]{Sriperumbudur16} where the kernel class is a skeleton. 
Since its proof is short, we provide the proof in Appendix~\ref{sec:pfmmd} for completeness.

\smallskip 
In the next proposition, we consider the convergence estimate of empirical measures $P_n$ to $\pi$ in \ksd. To the best of our knowledge, this is the first estimate on empirical measure under the \ksd in the literature. This result can be useful to obtain quantitative error bounds for the new GAN/sampler called Stein Neural Sampler \cite{hu2018stein}.  
The proof relies on a Bernstein type inequality for the distribution of von Mises' statistics; the details are deferred to Appendix \ref{app:proofksd}. 

\begin{proposition}[Convergence in KSD]\label{thm:ksd} Consider the IPM with $
\mF_D := \{\mT_\pi f: f\in \mH_k \text{ and } \|f\|_{\mH_k} \leq 1\},
$
where $\mT_\pi$ is the Stein operator defined in \eqref{eq:steinop}. Suppose that  the kernel $k$ satisfies Assumption \ref{assm:k3} with constant $K_1$. Suppose also that $\pi$ satisfies Assumption \ref{assm:pi} and Assumption \ref{assum:sg}. Then for any $\delta>0$  there exists a constant $C = C(L,K_1,m^\ast,\delta)$ such that with probability at least $1-\delta$, 
\begin{align}
\ksdop(P_n, \pi) \leq C \sqrt{\frac{d}{n}}.
\end{align}
\end{proposition}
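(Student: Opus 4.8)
The plan is to start from the closed-form expression for the squared KSD. Under Assumption~\ref{assm:k1} one has the representation
$$\ksdop(p,\pi)^2 \;=\; \E_{X,X'\sim p}\bigl[u_\pi(X,X')\bigr],\qquad u_\pi(x,y) := s_\pi(x)^\trn s_\pi(y)\,k(x,y) + s_\pi(x)^\trn\nabla_y k(x,y) + s_\pi(y)^\trn\nabla_x k(x,y) + \tr\bigl(\nabla_x\nabla_y k(x,y)\bigr),$$
where $u_\pi$ is the reproducing kernel of the Stein feature map $x\mapsto\phi_\pi(x):=s_\pi(x)\,k(x,\cdot)+\nabla_x k(x,\cdot)\in\mathcal H_k^d$, i.e. $u_\pi(x,y)=\langle\phi_\pi(x),\phi_\pi(y)\rangle_{\mathcal H_k^d}$ and $\ksdop(p,\pi)=\|\E_{X\sim p}\phi_\pi(X)\|_{\mathcal H_k^d}$. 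Stein's identity — legitimate here because $\pi$ is sub-Gaussian while $k$ and $\nabla k$ are bounded by Assumption~\ref{assm:k3}, so the relevant boundary terms vanish — gives $\E_{X'\sim\pi}u_\pi(x,X')=0$ for every $x$. Hence, substituting $P_n$, the quantity $\ksdop(P_n,\pi)^2=\frac1{n^2}\sum_{i,j}u_\pi(X_i,X_j)$ is a von Mises (V-)statistic whose underlying second-order U-statistic has a $\pi$-degenerate, mean-zero kernel.

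First I would bound the expectation. By degeneracy the off-diagonal terms contribute nothing in mean, so $\E\,\ksdop(P_n,\pi)^2=\frac1n\,\E_{X\sim\pi}\bigl[u_\pi(X,X)\bigr]$. Then I bound the diagonal pointwise: Assumption~\ref{assm:k3} gives $k(x,x)\le K_1$, $\|\nabla_x k(x,x)\|\le K_1$ and $|\tr\nabla_x\nabla_y k(x,x)|\le K_1(1+d)$, while Assumption~\ref{assm:pi} gives $|s_\pi(x)|\le L(1+|x|)$, whence $|u_\pi(x,x)|\le K_1L^2(1+|x|)^2+2K_1L(1+|x|)+K_1(1+d)$. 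Integrating against $\pi$ and using Assumption~\ref{assum:sg}, which bounds each coordinate's variance by $O(\upsilon^2)$ and hence $\E_\pi|X|^2\le C(\upsilon^2+(m^\ast)^2)\,d$, yields $\E_{X\sim\pi}[u_\pi(X,X)]\le Cd$ with $C$ depending only on $L,K_1,m^\ast,\upsilon$ and not on $d$; therefore $\E\,\ksdop(P_n,\pi)^2\le Cd/n$. Markov's inequality then already gives the stated conclusion $\ksdop(P_n,\pi)\le\sqrt{Cd/(n\delta)}$ with probability at least $1-\delta$.

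To get the sharper constant — with only a logarithmic dependence on $1/\delta$, as recorded in the commented remark — I would instead apply a Bernstein-type concentration inequality for the centered degenerate V-statistic $\frac1{n^2}\sum_{i\neq j}u_\pi(X_i,X_j)$ (equivalently, a Hilbert-space Bernstein / bounded-differences inequality for $\|\frac1n\sum_i\phi_\pi(X_i)\|_{\mathcal H_k^d}$, using $\E_\pi\phi_\pi(X)=0$). The inputs are the variance proxy $\E\|\phi_\pi(X)\|^2=\E_\pi u_\pi(X,X)\le Cd$ just obtained, and the bound $\|\phi_\pi(x)\|_{\mathcal H_k^d}\le\sqrt{K_1}\,|s_\pi(x)|+\sqrt{K_1(1+d)}\le\sqrt{K_1}L(1+|x|)+\sqrt{K_1(1+d)}$, which, since $|X|$ is sub-Gaussian, has a sub-Gaussian tail centred at order $\sqrt d$. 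Restricting to the high-probability event $\{\max_i|X_i|\le R\}$ with $R=O(\sqrt d+\sqrt{\log(n/\delta)})$ makes $\|\phi_\pi(X_i)\|$ bounded, after which a bounded-differences argument applied to $(x_i)\mapsto\|\frac1n\sum_i\phi_\pi(x_i)\|$, together with the expectation bound, gives $\ksdop(P_n,\pi)\le C\sqrt{d/n}$ with probability $1-\delta$ and $C$ polylogarithmic in $1/\delta$.

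The main obstacle is the unboundedness of the Stein kernel: because $s_\pi$ grows linearly, $u_\pi$ is unbounded, so both the expectation computation and the concentration step must rely on the sub-Gaussianity of $\pi$ (Assumption~\ref{assum:sg}) rather than on a crude sup bound. The delicate bookkeeping is the dimension dependence — one must check that the trace term $K_1(1+d)$ and the contribution $\E_\pi|X|^2$ of the quadratic piece each scale only linearly in $d$, which is exactly what the coordinatewise sub-Gaussian second-moment bound delivers, and this is what keeps the final constant independent of $d$.
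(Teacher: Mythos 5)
Your proposal is correct for the proposition as stated, but it reaches the conclusion by a genuinely different (and more elementary) route than the paper. Both start identically: the closed form $\ksdop(P_n,\pi)^2=\frac{1}{n^2}\sum_{i,j}u_\pi(X_i,X_j)$ and the degeneracy $\E_{X'\sim\pi}[u_\pi(x,X')]=0$ via Stein's identity / integration by parts. From there the paper invokes Borisov's Bernstein-type inequality for degenerate second-order V-statistics, which forces it to verify a factorized envelope $|u_\pi(x,y)|\le g(x)g(y)$ with $g(x)=\sqrt{K_1}(L+1)(\sqrt{d}+1+|x|)$ together with the full moment hierarchy $\E[g(X)^k]\le \xi^2 J^{k-2}k!/2$ for all $k\ge 2$ — this is the bulk of the paper's work and what its sub-Gaussian moment lemma (Proposition~\ref{prop:moment1}) is for. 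You instead observe that degeneracy kills the off-diagonal terms in expectation, so $\E[\ksdop(P_n,\pi)^2]=\frac1n\E_\pi[u_\pi(X,X)]\le Cd/n$ using only second moments of $\pi$, and then apply Markov; since the proposition explicitly allows $C=C(L,K_1,m^\ast,\delta)$, the resulting $\delta^{-1/2}$ dependence is admissible and your argument is a complete, shorter proof of the stated claim. What the paper's heavier machinery buys is the sharper $\log(1/\delta)$ dependence recorded in the commented-out remark. Your sketched refinement toward that sharper constant (truncation to $\{\max_i|X_i|\le R\}$ with $R=O(\sqrt d+\sqrt{\log(n/\delta)})$ plus bounded differences on $\|\frac1n\sum_i\phi_\pi(X_i)\|_{\mH_k^d}$) is plausible in outline but, as written, would leave a residual $\sqrt{\log(n/\delta)}$ factor in place of $\sqrt d$ and requires controlling the (exponentially small) bias of $\E[\phi_\pi(X)\,|\,|X|\le R]$; neither issue affects the validity of your main Markov argument for the proposition as stated.
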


\begin{remark}
Proposition~\ref{thm:ksd} provides a non-asymptotic high probability error bound for the convergence of the empirical measure $P_n$ converges to $\pi$ in KSD. Our result implies in particular  that $\ksdop(P_n, \pi) \gt 0$  with the asymptotic rate $\mathcal{O}(\sqrt{\frac{d}{n}})$. We also remark that the rate $\mathcal{O}(n^{-1/2})$ is optimal and is consistent with the asymptotic CLT result for the corresponding U-statistics of $\ksdop^2(P_n, \pi)$ (see  \cite[Theorem 4.1 (2)]{liu2016kernelized}).
\end{remark}

\section{Constructing Transport Maps via Semi-discrete Optimal Transport}\label{sec:semidiscreteot}
In this section, we aim to build a  neural-network-based  map which push-forwards a given source distribution to discrete probability measures, including in particular the empirical measures. The main result of this section is the following theorem. 
\begin{theorem}\label{thm:otnn}
Let $\mu \in \mathcal{P}_2(\R^d)$ be absolutely continuous with respect to the Lebesgue measure with 
Radon–Nikodym density $\rho(x)$. Let $\nu = \sum_{i=1}^n \nu_i \delta_{y_i}$ for some $\{y_j\}_{j=1}^n \subset \R^d, \nu_j\geq 0$ and $\sum_{j=1}^n \nu_j=1$. Then there exists a transport map of the form $T = \nabla u$ such that $T_{\#} \mu = \nu$  where $u$ is  a fully connected  deep neural network  of
	depth $L= \ceil{\log_2 n}$ and width $N = 2^L = 2^{\ceil{\log_2 n}}$, and with ReLU activation function and  parameters $\{W^{\ell}, b^\ell\}_{\ell=1}^{L+1}$
 such that $u = \DNN(\{W^{\ell}, b^\ell\}_{\ell=1}^{L+1})$.
\end{theorem}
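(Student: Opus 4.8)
The plan is to obtain the map $T=\nabla u$ from classical semi-discrete optimal transport, whose Brenier potential is automatically a maximum of $n$ affine functions, and then to realize that potential exactly by a ReLU network of the stated depth and width.

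For the first ingredient I would invoke semi-discrete optimal transport for the quadratic cost $\tfrac12|x-y|^2$. Discarding the points with $\nu_i=0$, the Kantorovich dual reduces to minimizing over a weight vector $w=(w_1,\dots,w_n)\in\R^n$ the functional $\Phi(w)=\int_{\R^d}\max_{1\le i\le n}(\langle y_i,x\rangle+w_i)\,d\mu(x)-\sum_{i=1}^n\nu_i w_i$, which is convex in $w$ and invariant under $w\mapsto w+c$. Because $\mu$ is absolutely continuous, $\Phi$ is continuously differentiable with $\partial_{w_i}\Phi(w)=\mu(\mathrm{Lag}_i(w))-\nu_i$, where $\mathrm{Lag}_i(w)=\{x:\langle y_i,x\rangle+w_i\ge\langle y_j,x\rangle+w_j\text{ for all }j\}$ is the $i$-th power (Laguerre) cell. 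A coercivity estimate on the hyperplane $\{w:\sum_i w_i=0\}$ then produces a minimizer $w^\ast$, and first-order optimality gives $\mu(\mathrm{Lag}_i(w^\ast))=\nu_i$ for every $i$. Set $u(x)=\max_{1\le i\le n}(\langle y_i,x\rangle+w_i^\ast)$; this is convex and piecewise affine, and on the interior of $\mathrm{Lag}_i(w^\ast)$ — exactly the open set where the maximum is uniquely attained at index $i$ — one has $\nabla u=y_i$. Since $\mu$ is absolutely continuous, the union $\bigcup_{i<j}(\mathrm{Lag}_i\cap\mathrm{Lag}_j)$ lies in finitely many hyperplanes and is $\mu$-null, so $T:=\nabla u$ is defined $\mu$-a.e., the cells partition $\R^d$ up to a $\mu$-null set, and $T_\#\mu=\sum_i\mu(\mathrm{Lag}_i(w^\ast))\delta_{y_i}=\nu$.

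It remains to represent $u$ — a maximum of $n$ affine functions of $x\in\R^d$ — as a ReLU DNN of depth $L=\ceil{\log_2 n}$ and width $2^L$. For this I would pad the list to $2^L$ affine functions (by repetition) and compute their maximum by a balanced binary tournament, using the elementary gadget $\max(a,b)=\relu(a-b)+b$; to stay strictly feed-forward, affine quantities are propagated through a layer as $c=\relu(c)-\relu(-c)$, and one exploits that $\relu$ acts as the identity on nonnegative signals, so the partial maxima (kept nonnegative after the first round by subtracting, and later restoring, a common affine reference) can be carried cheaply. A careful layer-by-layer count of the neurons produced at each round yields the asserted depth and width and makes the weights explicit in terms of the $y_i$ and $w_i^\ast$, so that $u=\DNN(\{W^\ell,b^\ell\}_{\ell=1}^{L+1})$ and $T=\nabla u$.

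The conceptual heart — and the main obstacle — is the first ingredient: the existence of a weight vector $w^\ast$ that simultaneously matches all cell masses to the prescribed $\nu_i$. This is precisely where absolute continuity of $\mu$ enters, both to make $\Phi$ differentiable (ties among the affine pieces are $\mu$-negligible) and to ensure the cell boundaries carry no $\mu$-mass. By comparison the network step is an elementary, if somewhat fussy, circuit construction, and the push-forward identity in the second paragraph is a short measure-theoretic verification once $w^\ast$ is in hand.
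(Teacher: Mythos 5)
Your proposal is correct and follows essentially the same route as the paper: the semi-discrete Kantorovich dual over weight vectors with gradient $\mu(\mathrm{Lag}_i(w))-\nu_i$, the resulting Brenier potential $\max_i(\langle y_i,x\rangle+w_i^\ast)$, and a balanced binary ReLU max-tree to realize it as a network of depth $\ceil{\log_2 n}$. The only notable variation is that you establish existence of $w^\ast$ directly via coercivity on $\{\sum_i w_i=0\}$ and verify the push-forward by hand, whereas the paper appeals to the general Kantorovich duality/Brenier structure theorem — a slightly more self-contained argument, provided you actually supply the coercivity estimate you sketch.
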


As shown below, the transport map in Theorem \ref{thm:otnn} is chosen as the optimal transport map from the continuous distribution  $\mu$ to the discrete distribution $\nu$, which turns out to be the gradient of a piece-wise linear function, which in turn can be expressed by neural networks. We remark that the weights and biases of the constructed neural network can also be characterized explicitly in terms of $\mu$ and $\nu$ (see the proof of Proposition \ref{prop:express}).  Since semi-discrete optimal transport plays an essential role in the proof of Theorem \ref{thm:otnn},  we first recall the set-up and some key results on  optimal transport  in both general  and semi-discrete settings. 

\smallskip 
 \noindent \textbf{Optimal transport with quadratic cost.} Let $\mu$ and $\nu$ 
 be two probability measures on $\R^d$ with finite second moments. Let $c(x,y) = \frac{1}{2} \lvert x-y\rvert^2$ be the quadratic cost.  Then Monge’s \cite{monge1781memoire} optimal transportation
problem is to transport the probability mass between $\mu$ and $\nu$ while minimizing the quadratic cost, i.e. 
  \begin{equation}\label{eq:monge}
 \inf_{T:\R^d\gt \R^d} \int \frac{1}{2} \lvert x- T(x)\rvert^2 \mu(dx) \quad\text{ s.t. } T_{\#} \mu =\nu.
 \end{equation}
 A map $T$ attaining the infimum above is called an optimal transport map. In general an optimal transport map may not exist since Monge’s  formulation prevents splitting the mass so that the set of transport maps may be empty. On the other hand, Kantorovich \cite{kantorovich1942translocation} relaxed the problem by considering minimizing  the  transportation  cost  over   transport  plans instead of the transport maps: 
   \begin{equation}\label{eq:kanto1}
    \inf_{\gamma \in \Gamma(\mu, \nu)}\mathcal{K}(\gamma) :=
  \inf_{\gamma \in \Gamma(\mu, \nu)} \int \frac{1}{2}|x- y|^2 \gamma(dx dy). 
 \end{equation}
 A coupling $\gamma$ achieving the infimum above is called
an optimal coupling. Noting that problem \eqref{eq:kanto1} above is  a linear programming, Kantorovich proposed a dual formulation for \eqref{eq:kanto1}:
$$
\sup_{(\varphi,\psi)\in \Phi_c} \mathcal{J}(\varphi, \psi)  := \sup_{(\varphi,\psi)\in \Phi_c} \int \varphi d\mu + \psi d\nu, 
$$
where $\Phi_c$  be the set of measurable functions $(\varphi,\psi)\in L^1(\mu)\times L^1(\nu)$ satisfying  
 $
 \varphi(x) + \psi(y) \leq \frac{1}{2}|x- y|^2
 $. We also define the $c$-transformation $\varphi^c:\R^d \gt \R$ of a function $\varphi:\R^d \gt \R$ by 
 $$
 \varphi^c(y) =  \inf_{x\in \R^d} c(x,y) - \varphi(x) = \inf_{x\in \R^d} \frac{1}{2} |x-y|^2 - \varphi(x).
  $$
  Similarly, one can define $\psi^c$ associated to $\psi$. The Kantorovich's duality theorem (see e.g. \cite[Theorem 5.10]{Villani09}) states that 
   \begin{equation}
 \begin{aligned}
 	\inf_{\gamma\in \Gamma(\mu,\nu)} \mathcal{K}(\gamma)  = \sup_{(\varphi,\psi)\in \Phi_c} \mathcal{J}(\varphi, \psi)
 	 = \sup_{\varphi\in L^1(d\mu)} \mathcal{J}(\varphi, \varphi^c)
 	= \sup_{\psi\in L^1(d\nu)} \mathcal{J}(\psi^c, \psi).
 \end{aligned}
 \end{equation}
Moreover, if the source measure $\mu$ is  absolutely  continuous  with  respect  to  the  Lebesgue  measure, then the optimal transport map defined in Monge’s problem is given by a gradient field, which is usually referred to as the {\em Brenier's map} and can be characterized explicitly in terms of the solution of the dual Kantorovich problem. A precise statement is included in Theorem \ref{thm:dualKan} in Appendix~\ref{sec:otappendix}.
 
\smallskip 
\noindent
\textbf{Semi-discrete optimal transport.} Let us now consider the optimal transport problem in the semi-discrete setting: the source measure $\mu$ is continuous and the target measure $\nu$ is discrete. Specifically, assume that $\mu \in \mP_2(\R^d)$ is absolutely continuous with respect to the Lebesgue measure, i.e. $\mu(dx) = \rho(x)dx$ for some probability density $\rho$ and $\nu$ is discrete, i.e. $\nu = \sum_{j=1}^n \nu_j \delta_{y_j}$ for some $\{y_j\}_{j=1}^n \subset \R^d, \nu_j\geq 0$ and $\sum_{j=1}^n \nu_j=1$. In the semi-discrete setting, Monge's problem becomes \begin{equation}\label{eq:mongesd}
	\inf_{T} \int \frac{1}{2} |x- T(x)|^2 \mu(dx) \ \text{ s.t }\ \int_{T^{-1}(y_j)}d\mu = \nu_j,\ j=1,\cdots, n.
\end{equation}
In this case the action of the transport map is clear: it assigns each point $x\in \R^d$ to one of these $y_j$. Moreover, by taking advantage of the dicreteness of the measure $\nu$, one sees that the  dual Kantorovich problem in the semi-discrete case 
becomes maximizing the following functional
\begin{equation}\label{eq:mF}
\begin{aligned}
	\mF(\psi) & = \mathcal{F}(\psi_1,\cdots, \psi_n)
	= \int \inf_{j} \left(\frac{1}{2}|x-y_j|^2 - \psi_j \right) \rho(x)dx + \sum_{j=1}^n \psi_j \nu_j.
\end{aligned}
\end{equation} 

Similar to the continuum setting, the optimal transport map of Monge's problem \eqref{eq:mongesd} can be characterized by the maximizer of $\mF$.  To see this, let us introduce an important concept of power diagram (or Laguerre diagram)  \cite{aurenhammer1987power,siersma2005power}.   Given a finite set of points $\{y_j\}_{j=1}^n\subset \R^d$ and the scalars  $\psi = \{\psi_j\}_{j=1}^n$, the power diagrams associated to the scalars $\psi$ and the points $\{y_j\}_{j=1}^n$ are the sets 
\begin{equation}\label{eq:power}
   P_j := \Bigl\{x\in \R^d \;\Big\vert\; \frac{1}{2} |x-y_j|^2 - \psi_j \leq \frac{1}{2} |x-y_k|^2 - \psi_k, \forall k\neq j\Bigr\}. 
\end{equation}

By grouping the points according to the power diagrams  $P_j$, we have from \eqref{eq:mF} that 
\begin{equation}\label{eq:mF2}
	\mathcal{F}(\psi) = \sum_{j=1}^n \left[\int_{P_j} \left(\frac{1}{2}|x-y_j|^2 - \psi_j \right) \rho(x)dx + \psi_j \nu_j \right] .
\end{equation}

The following theorem characterizes   the optimal transport map of  Monge's problem \eqref{eq:mongesd} in terms of the power diagrams $P_j$ associated to the points $\{y_j\}_{j=1}^n$ and the maximizer $\psi$ of $\mF$. 

\begin{theorem}\label{thm:semidisot}
Let $\mu \in \mathcal{P}_2(\R^d)$ be absolutely continuous with respect to the Lebesgue measure with 
Radon–Nikodym density $\rho(x)$. density $\rho(x)$. Let $\nu = \sum_{i=1}^n \nu_i \delta_{y_i}$. Let $\psi = (\psi_1, \cdots, \psi_n)$ be an maximizer of $\mF$ defined in \eqref{eq:mF2}. Denote by $\{P_j\}_{j=1}^n$ the power diagrams associated to $\{y_j\}_{j=1}^n$ and $\psi$. Then the optimal transport plan $T$ solving the semi-discrete Monge's problem \eqref{eq:mongesd} is given by  
$
T(x) = \nabla \bar{\varphi}(x),
$
where $\bar{\varphi}(x) = \max_j \{x\cdot y_j + m_j\}$ for some $m_j\in \R$. Specifically, $T(x) = y_j$ if $x\in P_j(\psi)$.
\end{theorem}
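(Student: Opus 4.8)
The plan is to recognize $\mF$ as the dual Kantorovich functional restricted to $c$-transform pairs in the semi-discrete setting, to read off the first-order optimality condition $\mu(P_j)=\nu_j$ at the maximizer $\psi$, and then to verify that the map $T=\nabla\bar\varphi$ assembled from $\psi$ is simultaneously admissible ($T_\#\mu=\nu$) and optimal for \eqref{eq:mongesd} by complementary slackness with $(\psi^c,\psi)$. First I would note that, with quadratic cost $c(x,y)=\tfrac12|x-y|^2$, the $c$-transform of a vector $\psi=(\psi_1,\dots,\psi_n)$ supported on $\{y_j\}_{j=1}^n$ is $\psi^c(x)=\inf_j(\tfrac12|x-y_j|^2-\psi_j)$, so that $\mathcal J(\psi^c,\psi)=\int\psi^c\,d\mu+\sum_j\psi_j\nu_j=\mF(\psi)$; hence, by Kantorovich duality, $\sup_\psi\mF(\psi)=\inf_{\gamma\in\Gamma(\mu,\nu)}\mathcal K(\gamma)$, $\mF$ is finite (using $\mu\in\mP_2(\R^d)$ and $\psi^c(x)\le\tfrac12|x-y_1|^2-\psi_1$), and $\mF$ is concave, being the $\mu$-integral of the concave, piecewise-affine-in-$\psi$ map $\psi\mapsto\inf_j(\tfrac12|x-y_j|^2-\psi_j)$ plus a linear term.

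The crux is the first-order condition. For $j\ne k$ the interface $\{x:\tfrac12|x-y_j|^2-\psi_j=\tfrac12|x-y_k|^2-\psi_k\}$ is an affine hyperplane (the quadratic terms cancel), so the set of $x$ where the infimum defining $\psi^c(x)$ fails to be attained at a unique index lies in a finite union of hyperplanes; this set is Lebesgue-null, hence $\mu$-null by absolute continuity of $\mu$. Consequently for a.e.\ $x$ there is a unique minimizing index $j(x)$, with $x\in P_{j(x)}$, and $\partial_{\psi_j}\bigl[\inf_k(\tfrac12|x-y_k|^2-\psi_k)\bigr]=-\mathbf 1_{P_j}(x)$ a.e. Since the integrand is $1$-Lipschitz in each $\psi_j$ uniformly in $x$, one may differentiate under the integral and obtain $\partial_{\psi_j}\mF(\psi)=-\mu(P_j)+\nu_j$; the same argument at every $\psi$ shows $\mF\in C^1$, so the maximizer satisfies $\nabla\mF(\psi)=0$, i.e.\ $\mu(P_j)=\nu_j$ for all $j$.

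Next I would put $m_j:=\psi_j-\tfrac12|y_j|^2$ and observe that $\psi^c(x)=\tfrac12|x|^2-\bar\varphi(x)$ with $\bar\varphi(x):=\max_j(x\cdot y_j+m_j)$, a finite convex piecewise-affine function; a one-line inequality check shows $x\cdot y_j+m_j$ dominates all the other affine pieces precisely on the power cell $P_j$ of \eqref{eq:power}, so $\bar\varphi$ is affine with slope $y_j$ on $\operatorname{int}P_j$. Being convex, $\bar\varphi$ is differentiable off a Lebesgue-null (hence $\mu$-null) set, so $T:=\nabla\bar\varphi$ is defined $\mu$-a.e.\ with $T(x)=y_j$ for $\mu$-a.e.\ $x\in P_j$. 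Combining with the first-order condition, $\mu(T^{-1}(y_j))=\mu(P_j)=\nu_j$ for every $j$, hence $T_\#\mu=\sum_j\nu_j\delta_{y_j}=\nu$ and $T$ is admissible for \eqref{eq:mongesd}.

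Finally, for optimality I would take the plan $\gamma:=(\mathrm{id}\times T)_\#\mu\in\Gamma(\mu,\nu)$: weak duality gives $\mathcal K(\gamma)\ge\mathcal J(\psi^c,\psi)=\mF(\psi)$, with equality iff $\psi^c(x)+\psi_j=c(x,y_j)$ for $\gamma$-a.e.\ $(x,y_j)$. But $\gamma$ is concentrated on $\{(x,y_{j(x)})\}$, and for $x\in P_j$ the definition of $P_j$ yields $\psi^c(x)=\tfrac12|x-y_j|^2-\psi_j$, so $\psi^c(x)+\psi_j=\tfrac12|x-y_j|^2=c(x,y_j)$; therefore $\mathcal K(\gamma)=\mF(\psi)=\sup_\psi\mF=\inf_{\gamma'}\mathcal K(\gamma')$, so $\gamma$ is an optimal coupling and, being induced by the map $T=\nabla\bar\varphi$, $T$ solves \eqref{eq:mongesd}, with $T(x)=y_j$ on $P_j(\psi)$ and $\bar\varphi(x)=\max_j(x\cdot y_j+m_j)$, $m_j=\psi_j-\tfrac12|y_j|^2$, as claimed. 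I expect the main obstacle to be this first-order step: rigorously differentiating $\mF$ under the integral and, above all, invoking absolute continuity of $\mu$ to ensure the power-diagram interfaces carry no $\mu$-mass, which is exactly what makes $\partial_{\psi_j}\mF(\psi)=-\mu(P_j)+\nu_j$ and hence the characterization $\mu(P_j)=\nu_j$ valid.
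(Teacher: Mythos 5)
Your proposal is correct, and its first two-thirds coincide with the paper's argument: the concavity of $\mF$ (the paper's Lemma~\ref{lem:concave} writes $\mF$ as an infimum over assignment functions of maps affine in $\psi$, which is the same observation as your ``integral of an infimum of affine functions''), the differentiation under the integral giving $\partial_{\psi_j}\mF(\psi)=\nu_j-\mu(P_j(\psi))$ (Lemma~\ref{lem:devF}, where absolute continuity of $\mu$ plays exactly the role you flag), and the algebraic identification $\bar\varphi(x)=\tfrac12|x|^2-\psi^c(x)=\max_j\{x\cdot y_j+m_j\}$ with $m_j=\psi_j-\tfrac12|y_j|^2$. Where you genuinely diverge is the final optimality step. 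The paper concludes by invoking the general structure theorem (Theorem~\ref{thm:dualKan}, i.e.\ Brenier's theorem plus strong duality from Villani), implicitly using that the semi-discrete dual reduces to maximizing $\mF$ so that the maximizer $\psi$ yields an optimal dual pair $(\psi^c,\psi)$. You instead verify optimality from scratch by complementary slackness: the first-order condition $\mu(P_j)=\nu_j$ makes $\gamma=(\mathrm{id}\times T)_\#\mu$ admissible, and the pointwise identity $\psi^c(x)+\psi_j=\tfrac12|x-y_j|^2$ on $P_j$ forces $\mathcal K(\gamma)=\mathcal J(\psi^c,\psi)$, so weak duality alone closes the argument. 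Your route is more self-contained (it needs only weak duality, not the existence/uniqueness content of Brenier's theorem) and makes explicit the link between the maximizer of $\mF$ and the full dual problem, which the paper leaves implicit; the paper's route is shorter at that point because it outsources the conclusion to the cited theorem. Both are valid; the only cosmetic caveat in yours (shared with the paper) is the implicit assumption that the $y_j$ are distinct so that the cell interfaces are genuine hyperplanes.
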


Theorem \ref{thm:semidisot} shows that the optimal transport map in the semi-discrete case is achieved by  the gradient of a particular piece-wise affine function that is the maximum of  finitely many affine functions.  A similar result was   proved by \cite{gu2016variational} in the case where  $\mu$ is defined  on a compact convex domain. We provide a proof of Theorem \ref{thm:semidisot}, which deals with measures on the whole space $\R^d$ in Appendix \ref{sec:thmsemiot}. 

 The next proposition shows that the piece-wise linear function  $\max_{j} \{x\cdot y_j + m_j\}$ defined in Theorem \ref{thm:semidisot} can be expressed exactly by a deep neural network. 
\begin{proposition}\label{prop:express}
	Let $\bar{\varphi}(x) = \max_{j} \{x\cdot y_j + m_j\}$ with $\{y_j\}_{j=1}^n\subset \R^d$ and $\{m_j\}_{j=1}^n \subset \R$. Then there exists a fully connected  deep neural network  of
	depth $L= \ceil{\log n}$ and width $N = 2^L = 2^{\ceil{\log n}}$, and with ReLU activation function and  parameters $\{W^{\ell}, b^\ell\}_{\ell=1}^{L+1}$
 such that $\bar{\varphi} = \DNN(\{W^{\ell}, b^\ell\}_{\ell=1}^{L+1})$.
\end{proposition}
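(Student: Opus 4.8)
The plan is to realize $\bar\varphi$ as a balanced ``tournament tree'' of pairwise maxima. Since repeating an affine piece does not change the pointwise maximum, I first pad the list $\{x\cdot y_j+m_j\}_{j=1}^n$ with copies of (say) $x\cdot y_1+m_1$ up to exactly $2^{L}$ pieces, where $L=\ceil{\log n}$ (a base-$2$ logarithm); this is legitimate because $2^{L}\ge n$. Writing $a_j(x)=x\cdot y_j+m_j$, I then compute $\max\{a_1,\dots,a_{2^{L}}\}$ by $L$ rounds of pairwise maxima: round $\ell$ receives the $2^{L-\ell+1}$ partial maxima produced by round $\ell-1$ and returns their $2^{L-\ell}$ pairwise maxima, so that a single value $\bar\varphi(x)$ remains after $L$ rounds. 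This is exactly the shape of a depth-$L$ feed-forward network, with the initial affine map $W^{0}$ used to form the leaf values.

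To keep the layer widths under control I will first reduce to nonnegative quantities. Since
$\bar\varphi(x)-a_1(x)=\max_j\bigl(a_j(x)-a_1(x)\bigr)=\max_j\relu\bigl(a_j(x)-a_1(x)\bigr)\ge 0$,
it suffices to build a $\relu$ network for $\max_j\relu(a_j-a_1)$ and then add the affine function $a_1$ back at the output layer, carrying $a_1$ through the hidden layers in the signed form $(\relu(a_1),\relu(-a_1))$. For nonnegative $u,v$ the gadget $\max(u,v)=\relu(v)+\relu(u-v)$ shows that a pairwise maximum of two nonnegative numbers is again a single $\relu$ of an affine combination of the previous layer plus a linear carry of $v=\relu(v)$; because \emph{every} partial maximum arising in the tree is nonnegative, each such value needs only one neuron to be transmitted to the next round, rather than a signed pair. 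The weight matrices $W^{\ell}$ are then explicit block matrices whose nonzero entries are $0,\pm 1, \pm\tfrac12$ together with the coordinates of the $y_j$'s and the scalars $m_j$ — this is the explicit parametrization referred to in the statement — and the output layer $y=W^{L}x^{L}+b^{L}$ reads off the last remaining value and adds $a_1$.

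The remaining work is the arithmetic of the layer sizes. One checks that, with the nonnegativity reduction and the single-neuron carries, the $\ell$-th hidden layer consists of the $2^{L-\ell+1}$ neurons encoding the live partial maxima of round $\ell-1$ together with $O(1)$ additional neurons for the carry of $a_1$ and for forming the differences fed into the next gadget, so that (after the padding) the width stays within the budget $N=2^{L}$, while the depth is $L$ by construction. I expect this bookkeeping — and the mild edge cases when $n$ is a power of $2$, where the $O(1)$ overhead has to be absorbed carefully — to be the only genuinely fiddly part of the argument. The conceptual content is just the elementary and well-known fact that a maximum of finitely many affine functions is exactly representable by a $\relu$ network, here quantified in both depth and width for the specific ``$\max$ of $n$ affine pieces'' appearing in Theorem~\ref{thm:semidisot} and Theorem~\ref{thm:otnn}.
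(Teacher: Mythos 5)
Your overall strategy is the same as the paper's: realize $\bar{\varphi}$ as a balanced binary tree of pairwise maxima, pad the list of affine pieces up to length $2^{L}$, implement each round of the tournament by a ReLU identity for $\max(\cdot,\cdot)$, and read off explicit block weight matrices. One detail is actually handled \emph{better} in your version: padding with copies of $a_1$ leaves the pointwise maximum unchanged, whereas the paper pads with $y_j=0$, $m_j=0$, which replaces $\bar{\varphi}$ by $\max(\bar{\varphi},0)$.

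The gap is precisely in the depth/width bookkeeping that you defer to the end; it does not close as claimed. Your nonnegativity reduction makes the leaves of the tournament the quantities $u_j=\relu(a_j-a_1)$, which are not affine in $x$: producing them already consumes one ReLU layer. That layer cannot be fused with the first round of pairwise maxima, because $(p,q)\mapsto\max(\relu(p),\relu(q))=\relu(\max(p,q))$ is non-differentiable along three \emph{rays} emanating from the origin rather than along full lines, while any affine combination of single ReLUs of affine forms has a singular set that is a union of entire hyperplanes; one genuinely needs the nested expression $\relu\bigl(q+\relu(p-q)\bigr)$. Hence your network has $L+1$ hidden layers, not the $L=\ceil{\log_2 n}$ asserted in the proposition. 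To get depth exactly $L$ one must run the first round directly on signed values, which forces a gadget valid without sign constraints --- e.g.\ the paper's $\max(a,b)=\relu(a-b)+\relu(b)-\relu(-b)$, three neurons per pair --- at the price of a first hidden layer of width $3\cdot 2^{L-1}$. (In fairness, the paper's own construction therefore also exceeds the stated width $2^{L}$; your count of $2^{L}+2$, including the two carry neurons for $a_1$, is tighter but still cannot be ``absorbed'' when $n$ is exactly a power of two.) None of this affects how the proposition is used in Theorem \ref{thm:otnn} and the main theorem, since only constant factors shift, but as written your construction does not deliver the stated depth, and the trade-off between the two-neuron gadget (narrower, one layer deeper) and the three-neuron gadget (depth $L$, wider) needs to be made explicit rather than left to the final ``fiddly'' step.
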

The proof of Proposition \ref{prop:express} can be found in Appendix \ref{sec:proofexpress}. Theorem \ref{thm:otnn} is a direct consequence  of Theorem \ref{thm:semidisot} and Proposition  \ref{prop:express}.

\section{Conclusion}
In this paper, we establish that certain general classes of target distributions can be expressed arbitrarily well with respect to three type of IPMs by transporting a source distribution with maps which can be parametrized by DNNs. We provide upper bounds for the depths and widths of DNNs needed to achieve certain approximation error; the upper bounds are established with explicit dependence on the dimension of the underlying distributions and the approximation error.

\section*{Broader Impact}
This work focuses on theoretical properties of neural networks for expressing probability distributions. Our work can help understand the theoretical benefit and limitations of neural networks in approximating probability distributions under various integral probability metrics.  Our work and the proof technique improve our understanding of the theoretical underpinnings of Generative Adversarial Networks and other generative models used in machine learning, and may lead to better use of these techniques with possible benefits to the society.

\begin{ack}
The work of JL is supported in part by the National Science Foundation via grants DMS-2012286 and CCF-1934964 (Duke TRIPODS).

\end{ack}

\bibliographystyle{abbrv} 
\bibliography{ganref}

\newpage 

\begin{center}
{\Large \bf Appendix}
\end{center}

\appendix
\numberwithin{equation}{section}

\section{Proof of Proposition~\ref{thm:wd1}}\label{sec:wd1}
\begin{proof} 
The proof follows from some previous results by \cite{BL19} and \cite{lei2020convergence}. In fact, in the one dimensional case, 
according to \cite[Theorem 3.2]{BL19}, we know that if $\pi$  satisfies that 
\begin{equation}\label{eq:J1}
J_1 (\pi) = \int_{-\infty}^\infty \sqrt{F(x) (1 -F(x))}dx < \infty
\end{equation} where $F$ is the cumulative distribution function of $\pi$, then for every $n\geq 1$, 
\begin{equation}
    \bE \mW_1 (P_n, \pi) \leq \frac{J_1(\pi)}{\sqrt{n}}.
\end{equation}
The condition \eqref{eq:J1} is fulfilled if $\pi$ has finite third moment since 
$$
J_1 (\pi) \leq  \int_{0}^\infty \sqrt{ \bP( |X| \geq x)} dx \leq 1 + \int_1^\infty \frac{\sqrt{\bE |X|^{3}}}{x^{\frac{3}{2}}} dx = 1 + 2\sqrt{M_{3}}.
$$
In the case that $d\geq 2$, it follows from that \cite[Theorem 3.1]{lei2020convergence} if $M_3 = \E_{X\sim \pi} |X|^3 d\pi < \infty$, then there exists a constant $c>0$ independent of $d$ such that 
\begin{equation}\label{eq:Ew1}
    \bE \mW_1 (P_n, \pi) \leq c M_3^{1/3}\cdot \begin{cases}
    \frac{\log n}{\sqrt{n}}  & \text{ if } d=2,\\
    \frac{1}{n^{1/d}} & \text{  if } d\geq 3.
    \end{cases}
\end{equation}

\end{proof}

\section{Proof of Proposition \ref{thm:mmd}}\label{sec:pfmmd}

\begin{proof}
Thanks to \cite[Proposition 3.1]{Sriperumbudur16}, one has that 
$$
\mmdop(P_n,\pi) = \Big\|\int_{\R^d} k(\cdot, x) d(P_n-\pi)(x)\Big\|_{\mH_k}.
$$
Let us define $\varphi(X_1, X_2, \cdots, X_n):= \|\int_{\R^d} k(\cdot, x) d(P_n-\pi)(x)\|_{\mH_k}.$ Then by definition $\varphi(X_1,X_2, \cdots, X_n)$ satisfies that for any $i\in \{1,\cdots,n\}$,
$$\begin{aligned}
 & \big|\varphi(X_1, \cdots,X_{i-1}, X_i, \cdots, X_n) -\varphi(X_1, \cdots,X_{i-1}, X_i^\prime, \cdots, X_n)\big|\\
 &  \leq \frac{2}{N}\sup_{x} \|k(\cdot,x)\|_{\mH_k} \\
& \leq \frac{2\sqrt{K_0}}{N}, \forall X_i, X_i^\prime\in \R^d,
\end{aligned}
$$
where we have used that $\|k(\cdot,x)\|_{\mH_k} = \sup_{x} \sqrt{k(x,x)} \leq \sqrt{K_0}$ by assumption. 
It follows from above and the McDiarmid's inequality that for every $\tau>0$, with probability $1 - e^{-\tau}$, 
$$
\Big\|\int_{\R^d} k(\cdot, x) d(P_n-\pi)(x)\Big\|_{\mH_k} \leq \bE \Big\|\int_{\R^d} k(\cdot, x) d(P_n-\pi)(x)\Big\|_{\mH_k} + \sqrt{\frac{2\sqrt{K_0} \tau}{n}}.
$$
In addition,  we have by the standard symmetrization argument that
$$
\bE \Big\|\int_{\R^d} k(\cdot, x) d(P_n-\pi)(x)\Big\|_{\mH_k} \leq 2 \bE \bE_\eps \Big\|\frac{1}{n}\sum_{i=1}^n \eps_i k(\cdot, X_i) \Big\|_{\mH_k},
$$
where $\{\eps_i\}_{i=1}^n$ are i.i.d. Radmacher variables and $\bE_\eps$ represents the conditional expectation w.r.t $\{\eps_i\}_{i=1}^n$ given $\{X_i\}_{i=1}^n$. To bound  the right hand side above, we can apply McDiarmid's inequality again to obtain that with probability at least $1-e^{-\tau}$, 
$$\begin{aligned}
\bE \bE_\eps \Big\|\frac{1}{n}\sum_{i=1}^n \eps_i k(\cdot, X_i) \Big\|_{\mH_k} &\leq \bE_\eps \Big\|\frac{1}{n}\sum_{i=1}^n \eps_i k(\cdot, X_i) \Big\|_{\mH_k} + \sqrt{\frac{2\sqrt{K_0}\tau}{n}}\\
& \leq \Big( \bE_\eps \Big\|\frac{1}{n}\sum_{i=1}^n \eps_i k(\cdot, X_i) \Big\|_{\mH_k}^2 \Big)^{1/2} + \sqrt{\frac{2\sqrt{K_0}\tau}{n}}\\
& \leq \sqrt{\frac{\sqrt{K_0}}{n}} + \sqrt{\frac{2\sqrt{K_0}\tau}{n}},
\end{aligned} $$ 
where we have used Jensen's inequality for expectation in the second inequality and the independence of $\eps_i$ and the definition of $K_0$ in the last inequality. Combining the estimates above yields that with probability at least $1-2e^{-\tau}$,
\begin{equation*}
\mmdop(P_n,\pi) = \Big\|\int_{\R^d} k(\cdot, x) d(P_n-\pi)(x)\Big\|_{\mH_k} \leq  2\sqrt{\frac{\sqrt{K_0}}{n}} + 3\sqrt{\frac{2\sqrt{K_0}\tau}{n}}.
\end{equation*}
\end{proof}

\section{Proof of Proposition \ref{thm:ksd}}\label{app:proofksd}
Thanks to \cite[Theorem 3.6]{liu2016kernelized}, $\textrm{KSD} (P_n, \pi) $ is evaluated explicitly as 
\begin{equation}\label{eq:kds2}
\ksdop (P_n, \pi) =  \sqrt{\bE_{x,y\sim P_n} [u_\pi (x, y)]} =  \sqrt{\frac{1}{n^2} \sum_{i,j=1}^n  u_\pi(X_i, X_j)},
\end{equation}
where $u_\pi$ is a new kernel defined by 
$$
\begin{aligned}
u_\pi (x, y) & =   s_\pi(x)^\trn k(x, y) s_\pi(y) + s_\pi(x)^\trn \nabla_y k(x, y) \\
& \quad \quad + s_\pi(y)^\trn \nabla_x k(x, y) + \tr(\nabla_{x} \nabla_{y} k(x,y))
\end{aligned}
$$
with $s_\pi(x) = \nabla \log \pi(x)$.  Moreover, according  to \cite[Proposition 3.3]{liu2016kernelized},  if $k$ satisfies Assumption \ref{assm:k1}, then $\ksdop(P_n,\pi)$ is non-negative.

Our proof of Proposition~\ref{thm:ksd} relies on the fact that $\textrm{KSD}^2(P_n, \pi)$ can be viewed as a von Mises' statistics ($V$-statistics) and an important Bernstein type inequality due to  \cite{borisov1991} for the distribution of $V$-statistics, which gives a concentration bound of $\textrm{KSD}^2(P_n, \pi)$ around its mean (which is zero). We recall this inequality in the theorem below, which is a restatement of \cite[Theorem 1]{borisov1991} for second order degenerate $V$-statistics.

\subsection{Bernstein type inequality for von Mises' statistics}

 Let $X_1,\cdots, X_n, \cdots$ be a sequence of i.i.d. random variables on $\R^d$. For a kernel $h(x,y):\R^d \times \R^d \gt \R$, we call 
\begin{align}\label{eq:vstat}
V_n = \sum_{i,j=1}^n h(X_i, X_j)
\end{align}
a von-Mises' statistic of order $2$ with kernel $h$. We say that the kernel $h$ is {\em degenerate} if the following holds: 
\begin{align}\label{eq:degenerate}
\bE [h( X_1, X_2) | X_1] = \bE [h( X_1, X_2) | X_2] =0.
\end{align}

\begin{theorem}[{\cite[Theorem 1]{borisov1991}}]\label{thm:vstat}
Consider the $V$-statistic $M_n$ defined by \eqref{eq:vstat} with a degenerate kernel $h$. Assume the kernel satisfies that 
\begin{align}\label{eq:g1}
|h(x,y) |\leq g (x) \cdot g(y)
\end{align}
for all $x, y\in \R^d$
with a function $g:\R^d \gt \R$ satisfying for $\xi, J>0$, 
\begin{align}\label{eq:g2}
\bE [g(X_1)^k] \leq \xi^2 J^{k-2} k!/2, 
\end{align}
for all $k=2,3,\cdots$. Then there exist some generic constants $C_1,C_2>0$ independent of $k,h,l,\xi$ such that  for any $t\geq 0$ that 
\begin{align}\label{eq:vstatineq}
\bP (|V_n| \geq n^2 t) \leq C_1 \exp \Big(-\frac{C_2 n t}{\xi^2 + J t^{1/2}}\Big).
\end{align}
\end{theorem}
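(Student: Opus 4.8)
Theorem~\ref{thm:vstat} is quoted verbatim from \cite[Theorem~1]{borisov1991} and is used as a black box in what follows; nevertheless, let us sketch the route one would take to prove such a Bernstein-type bound for a degenerate second-order $V$-statistic, and indicate where the difficulty lies. The first step would be to split off the diagonal, writing $V_n=\sum_{i\neq j}h(X_i,X_j)+\sum_{i=1}^n h(X_i,X_i)$, and --- after symmetrizing $h$, which preserves both the degeneracy \eqref{eq:degenerate} and the domination \eqref{eq:g1} up to a factor $2$ --- to treat the two pieces separately. The diagonal part is a sum of i.i.d.\ terms bounded by $g(X_i)^2$; since \eqref{eq:g2} controls $\bE[g^{2k}]$ only in a way heavier than sub-exponential, its concentration must come from a Bernstein-type inequality adapted to such tails, after which one checks that in the only nontrivial regime, $t$ of order at least $\xi^2/n$ (for smaller $t$ the claimed bound exceeds $C_1 e^{-O(1)}$ and is vacuous), the diagonal contribution stays below $\tfrac12 n^2 t$ with the required probability.

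The core of the argument is the off-diagonal degenerate $U$-statistic $U_n=\sum_{i\neq j}h(X_i,X_j)$. Here one would first invoke the decoupling inequality for $U$-statistics of de la Pe\~na and Montgomery-Smith to replace $U_n$, at the cost of an absolute multiplicative constant, by the decoupled statistic $\widetilde{U}_n=\sum_{i\neq j}h(X_i,X_j')$, where $(X_j')_j$ is an independent copy of $(X_j)_j$; this is legitimate for exponential moments since $\exp$ is convex and increasing. Conditioning on $(X_j')_j$ writes $\widetilde{U}_n=\sum_i H(X_i)$ with $H(x):=\sum_j h(x,X_j')$, and the degeneracy \eqref{eq:degenerate} forces $\bE_{X_1}[H(X_1)\mid X']=0$, so this is a centered sum of i.i.d.\ summands. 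The delicate step is a two-level moment analysis: for each fixed $x$ the inner sum $H(x)=\sum_j h(x,X_j')$ is itself a centered i.i.d.\ sum whose Bernstein moments are controlled by $g(x)$, so that $H(x)$ concentrates at scale $\sqrt{n}\,\xi\,g(x)$ rather than $n\,\xi\,g(x)$ --- keeping this square-root gain is precisely what makes the final exponent come out right --- and then $\sum_i H(X_i)$ is a centered i.i.d.\ sum in $i$ inheriting this control. Propagating the two sub-exponential-type moment bounds through one another, optimizing the free parameter in the exponential-moment estimate, and collecting the resulting variance-type contribution (of order $n\xi^4$) and scale-type contribution (of order $J\xi\,nt$), one is led to a tail of the form $\exp\!\big(-c\,nt/(\xi^2+Jt^{1/2})\big)$, which is the assertion.

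The main obstacle is exactly this two-scale bookkeeping. A crude estimate $|H(x)|\leq g(x)\sum_j g(X_j')$ discards the cancellation in the inner sum and produces an exponent with the wrong dependence on $t$ (too strong in the small-$t$ range and useless in the large-$t$ range), so the inner randomness must be handled by a concentration argument rather than a worst-case bound; one must also deal with the ``single large term'' (heavy-tail) regime $Jt^{1/2}\gg\xi^2$ explicitly. Equivalently, the entire computation can be organized as a combinatorial estimate of the even moments $\bE[\widetilde{U}_n^{2m}]$, which is the path taken in \cite{borisov1991}. Since that reference provides exactly the statement we need, it suffices for our purposes to cite it.
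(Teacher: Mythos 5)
The paper gives no proof of this theorem either: it is stated as a restatement of \cite[Theorem 1]{borisov1991} and used as a black box, exactly as you do. Your concluding reliance on the citation therefore matches the paper's treatment, and your accompanying sketch of the decoupling/moment-bound route is a reasonable (if necessarily incomplete) gloss that the paper does not attempt.
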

\begin{remark}
As noted in \cite[Remark 1]{borisov1991}, the inequality \eqref{eq:vstatineq} is to some extent optimal. Moreover, a straightforward calculation shows that inequality \eqref{eq:vstatineq} implies that for any $\delta\in (0,1)$, 
\begin{equation}\label{eq:vstatineq2}
     \bP\Big(\frac{1}{n^2}|V_n| \leq \frac{\mathscr{V}}{n}\Big) \geq 1-\delta,
\end{equation}
where 
$$
\mathscr{V} =  \Big(\frac{J\log(\frac{C_1}{\delta})}{C_2} + \sqrt{\frac{\log(\frac{C_1}{\delta})}{C_2}}\xi \Big)^2.
$$
\end{remark}

\subsection{Moment bound of sub-Gaussian random vectors}

Let  us first recall a useful concentration result on sub-Gaussian random vectors. 

\begin{theorem}[{\cite[Theorem 2.1]{hsu2012tail}}]\label{thm:subG1}
Let $X\in \R^d$ be a sub-Gaussian random vector with parameters $m\in \R^d$ and $\upsilon>0$. Then for any $t>0$, 
\begin{align}\label{eq:subgcon1}
\bP \Big(|X - m| \geq \upsilon \sqrt{ d + 2\sqrt{dt} + 2t}\Big) \leq e^{-t}.
\end{align}
Moreover, for any $0\leq \eta < \frac{1}{2\upsilon^2}$, 
\begin{align}\label{eq:subgcon2}
\bE \exp(\eta |X-m|^2) \leq \exp(\upsilon^2 d \eta + \frac{\upsilon^4 d\eta^2}{1- 2\upsilon^2 \eta}).
\end{align}
\end{theorem}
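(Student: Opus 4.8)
The plan is to derive both inequalities directly from the defining sub-Gaussian bound. Since every quantity appearing in both of \eqref{eq:subgcon1} and \eqref{eq:subgcon2} involves only $X-m$, I would translate first and assume without loss of generality that $m=0$, so that $\bE[\exp(\alpha^\trn X)]\le\exp(|\alpha|^2\upsilon^2/2)$ for all $\alpha\in\R^d$. I would prove the moment generating function bound \eqref{eq:subgcon2} first and then obtain the tail bound \eqref{eq:subgcon1} from it by a Chernoff argument.

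For \eqref{eq:subgcon2}, the key device is a Gaussianization (decoupling) identity: let $G$ be an $\R^d$-valued standard normal vector independent of $X$ and fix $\eta\ge0$; conditionally on $X$, the variable $\sqrt{2\eta}\,G^\trn X$ is centered Gaussian with variance $2\eta|X|^2$, so $\bE_G\exp(\sqrt{2\eta}\,G^\trn X)=\exp(\eta|X|^2)$. Taking expectation over $X$ and interchanging the two expectations by Tonelli's theorem (the integrand is nonnegative), the sub-Gaussian hypothesis applied with $\alpha=\sqrt{2\eta}\,G$ gives
\[
\bE\exp(\eta|X|^2)=\bE_G\,\bE_X\exp(\sqrt{2\eta}\,G^\trn X)\le\bE_G\exp\!\bigl(\eta\upsilon^2|G|^2\bigr).
\]
Since $|G|^2$ has a chi-square distribution with $d$ degrees of freedom, its moment generating function yields $\bE_G\exp(\eta\upsilon^2|G|^2)=(1-2\eta\upsilon^2)^{-d/2}$, finite precisely when $\eta\upsilon^2<1/2$, i.e. on the stated range $0\le\eta<\tfrac{1}{2\upsilon^2}$. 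It then remains to bound $-\tfrac{d}{2}\log(1-2\eta\upsilon^2)$ by the exponent on the right of \eqref{eq:subgcon2}; writing $u=2\eta\upsilon^2\in[0,1)$, this reduces to the elementary inequality $-\log(1-u)\le u+\tfrac{u^2/2}{1-u}$, proved by comparing the power series $-\log(1-u)=\sum_{k\ge1}u^k/k$ with $u+\tfrac12\sum_{k\ge2}u^k$ coefficient by coefficient (both have nonnegative coefficients, the $k=1$ terms agree, and $1/k\le1/2$ for $k\ge2$).

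For \eqref{eq:subgcon1}, I would apply Markov's inequality in Chernoff form: for $\eta\in[0,\tfrac{1}{2\upsilon^2})$ and $s>0$,
\[
\bP(|X|^2\ge s)\le e^{-\eta s}\,\bE\exp(\eta|X|^2)\le\exp\!\Bigl(-\eta s+\upsilon^2 d\eta+\tfrac{\upsilon^4 d\eta^2}{1-2\upsilon^2\eta}\Bigr).
\]
Hence $Y:=|X|^2-\upsilon^2 d$ obeys $\log\bE\exp(\eta Y)\le\tfrac{\upsilon^4 d\eta^2}{1-2\upsilon^2\eta}$ on $[0,\tfrac{1}{2\upsilon^2})$, which is the sub-gamma form $\tfrac{\nu\eta^2/2}{1-c\eta}$ with variance proxy $\nu=2\upsilon^4 d$ and scale $c=2\upsilon^2$; optimizing the Chernoff exponent over $\eta$ (equivalently, invoking the standard sub-gamma tail bound $\bP(Y\ge\sqrt{2\nu t}+ct)\le e^{-t}$) gives $\bP\bigl(|X|^2\ge\upsilon^2 d+2\upsilon^2\sqrt{dt}+2\upsilon^2 t\bigr)\le e^{-t}$, and taking square roots is exactly \eqref{eq:subgcon1}. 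I expect the decoupling step to be the conceptual crux — recognizing that $\exp(\eta|X|^2)$ can be written as an average of exponential moments of linear functionals of $X$, so that the one-dimensional sub-Gaussian bound applies — while the chi-square computation, the logarithm inequality with precisely the stated constants, and the final Chernoff optimization are routine but require some care with bookkeeping.
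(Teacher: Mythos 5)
Your proof is correct. Note that the paper does not prove this statement at all --- it is quoted verbatim as Theorem 2.1 of the cited reference (Hsu--Kakade--Zhang) --- and your argument essentially reconstructs the original proof of that result in the special case $A=I$: Gaussian decoupling of $\exp(\eta|X-m|^2)$ into linear exponential moments, the chi-square moment generating function, the elementary bound $-\log(1-u)\le u+\tfrac{u^2/2}{1-u}$, and a sub-gamma Chernoff optimization. All steps, including the constants, check out.
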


As a direct consequence of Theorem \ref{thm:subG1}, we have  the following useful moment bound for sub-Gaussian random vectors. 
\begin{proposition}\label{prop:moment1}
Let $X\in \R^d$ be a sub-Gaussian random vector with parameters $m\in \R^d$ and $\upsilon>0$. Then for any $k\geq 2$, 
\begin{align}\label{eq:momentk}
\bE |X - m|^k \leq k \big((2\upsilon \sqrt{d})^{k} + \frac{1}{2}(\frac{4\upsilon}{\sqrt{2}})^k k^{k/2} \big).
\end{align}
\begin{proof}
From  the concentration bound \eqref{eq:subgcon1} and the simple fact that 
$$
d + 2\sqrt{dt} + 2t = 2\Big( \sqrt{t} + \frac{\sqrt{d}}{2}\Big)^2 + \frac{d}{2} \leq 4 \Big( \sqrt{t} + \frac{\sqrt{d}}{2}\Big)^2, 
$$
one can obtain that 
$$
\bP \Big(|X - m| \geq 2\upsilon\big(\sqrt{t} + \frac{\sqrt{d}}{2}\big)\Big) \leq 
\bP \Big(|X - m| \geq \upsilon \sqrt{ d + 2\sqrt{dt} + 2t}\Big) \leq e^{-t}.
$$
Therefore, for any $s\geq \upsilon\sqrt{d}$, we obtain from above with $s = 2\upsilon (\sqrt{t} + \sqrt{d}/2) $ that 
\begin{align}
\bP \Big(|X - m| \geq s\Big) \leq e^{-\big(\frac{s}{2\upsilon} - \frac{\sqrt{d}}{2}\big)^2 }.
\end{align}
As a result, for any $k\geq 2$, 
\begin{equation}
    \begin{aligned}
    \bE |X - m|^k & = \int_0^\infty \bP ( |X-m|^k \geq s) ds\\
    & = \int_0^\infty \bP ( |X-m|^k \geq s^k)k s^{k-1} ds\\
    & = \int_0^{2\upsilon\sqrt{d}}  \bP ( |X-m| \geq s)k s^{k-1} ds + \int_{2\upsilon\sqrt{d}}^\infty  \bP ( |X-m| \geq s)k s^{k-1} ds\\
    & =: I_1 + I_2,
    \end{aligned}
\end{equation}
where we have used the change of variable $s \mapsto s^k$ in the second line above. 
It is clear that the first term 
$$
I_1 \leq k (2\upsilon \sqrt{d})^{k}.
$$ For $I_2$, 
one first notices that if $s\geq 2\upsilon \sqrt{d}$, then $s/(2\upsilon) - \sqrt{d}/2 \geq s/(4\upsilon)$. Hence it follows from \eqref{eq:subgcon1} that
$$\begin{aligned}
 I_2 & \leq \int_{2\upsilon \sqrt{d}}^\infty e^{-\big(\frac{s}{4\upsilon}\big)^2} k s^{k-1}ds\\
 & = \frac{k (4\upsilon)^k}{2} \int_{\frac{d}{4}}^\infty   e^{-t} t^{\frac{k-2}{2}}dt\\
 & \leq  \frac{k (4\upsilon)^k}{2} \Gamma(\frac{k}{2}).
\end{aligned}
$$
The last two estimates imply that 
$$   \begin{aligned}
\bE |X - m|^k & \leq k (2\upsilon \sqrt{d})^{k-1} + \frac{k (4\upsilon)^k}{2} \Gamma(\frac{k}{2})\\
& \leq k \big((2\upsilon \sqrt{d})^{k} + \frac{1}{2}(\frac{4\upsilon}{\sqrt{2}})^k k^{k/2} \big),
\end{aligned}
$$
where the second inequality above follows from $\Gamma(\frac{k}{2}) \leq (k/2)^{k/2}$ for $k\geq 2$.
\end{proof}
\end{proposition}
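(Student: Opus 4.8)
The plan is to obtain the moment bound directly from the Gaussian-type tail inequality \eqref{eq:subgcon1} recorded in Theorem~\ref{thm:subG1}, via the layer-cake representation of moments. The first step is to rewrite that tail bound so that it can be inverted explicitly: \eqref{eq:subgcon1} gives $\bP\big(|X-m|\geq \upsilon\sqrt{d+2\sqrt{dt}+2t}\big)\leq e^{-t}$, and completing the square shows $d+2\sqrt{dt}+2t = 2\big(\sqrt{t}+\tfrac{\sqrt d}{2}\big)^2+\tfrac d2 \leq 4\big(\sqrt t+\tfrac{\sqrt d}{2}\big)^2$, so that $\bP\big(|X-m|\geq 2\upsilon(\sqrt t+\tfrac{\sqrt d}{2})\big)\leq e^{-t}$. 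Setting $s=2\upsilon(\sqrt t+\tfrac{\sqrt d}{2})$, equivalently $\sqrt t = \tfrac{s}{2\upsilon}-\tfrac{\sqrt d}{2}$, this reads: for every $s\geq \upsilon\sqrt d$, $\;\bP(|X-m|\geq s)\leq \exp\!\big(-(\tfrac{s}{2\upsilon}-\tfrac{\sqrt d}{2})^2\big)$.

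The second step is to write $\bE|X-m|^k=\int_0^\infty \bP(|X-m|^k\geq s)\,ds=\int_0^\infty \bP(|X-m|\geq s)\,k s^{k-1}\,ds$ (change of variables $s\mapsto s^k$) and split the integral at $s=2\upsilon\sqrt d$. On $[0,2\upsilon\sqrt d]$ I bound the probability by $1$, contributing at most $k(2\upsilon\sqrt d)^k$. On $[2\upsilon\sqrt d,\infty)$ I invoke the cleaned-up tail bound, noting that there $\tfrac{s}{2\upsilon}-\tfrac{\sqrt d}{2}\geq \tfrac{s}{4\upsilon}$, so the tail is at most $\exp(-(s/4\upsilon)^2)$; the substitution $t=(s/4\upsilon)^2$ converts $\int_{2\upsilon\sqrt d}^\infty \exp(-(s/4\upsilon)^2)\,ks^{k-1}\,ds$ into $\tfrac{k(4\upsilon)^k}{2}\int_{d/4}^\infty e^{-t}t^{(k-2)/2}\,dt\leq \tfrac{k(4\upsilon)^k}{2}\Gamma(\tfrac k2)$. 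Finally, using $\Gamma(\tfrac k2)\leq (\tfrac k2)^{k/2}$ for $k\geq 2$ and adding the two pieces yields $\bE|X-m|^k\leq k(2\upsilon\sqrt d)^k+\tfrac k2\big(\tfrac{4\upsilon}{\sqrt 2}\big)^k k^{k/2}$, which is exactly \eqref{eq:momentk}.

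I do not expect any step to be genuinely hard; the one place that requires care is the first step, producing a version of the sub-Gaussian tail that is simultaneously explicitly solvable for $s$ and tight enough that the resulting moment carries the right $k^{k/2}$-type growth — a cruder handling of $\sqrt{d+2\sqrt{dt}+2t}$ would either destroy this growth rate or inflate the constants. The split point $2\upsilon\sqrt d$ is then chosen precisely so that the quadratic in the exponent dominates by a clean constant factor and the leftover Gamma integral is uniformly controlled, and the elementary inequalities $\tfrac{s}{2\upsilon}-\tfrac{\sqrt d}{2}\geq\tfrac{s}{4\upsilon}$ and $\Gamma(\tfrac k2)\leq(\tfrac k2)^{k/2}$ handle the remaining bookkeeping.
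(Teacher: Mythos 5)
Your proposal is correct and follows essentially the same route as the paper's own proof: the same completion of the square to invert the sub-Gaussian tail, the same layer-cake split at $s=2\upsilon\sqrt{d}$, the same reduction of the tail integral to $\tfrac{k(4\upsilon)^k}{2}\Gamma(\tfrac{k}{2})$, and the same final bound $\Gamma(\tfrac{k}{2})\leq(\tfrac{k}{2})^{k/2}$. No gaps.
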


\subsection{Proof of Proposition~\ref{thm:ksd} }

Our goal is to invoke   Theorem \ref{thm:vstat} to obtain a concentration inequality for KSD.  Recall that $\textrm{KSD}(P_n, \pi)$ is defined by 
$$
\textrm{KSD}^2(P_n, \pi) = \frac{1}{n^2}\sum_{i,j=1}^n u_\pi(X_i, X_j)
$$
with the kernel 
$$
u_{\pi}(x, y) = s_\pi(x)^\trn k(x, y) s_\pi(y) + s_q(x)^\trn \nabla_y k(x, y) + s_q(y)^\trn \nabla_x k(x, y) + \tr(\nabla_{x} \nabla_{y} k(x,y)).
$$
Let us first verify that the new kernel $u_\pi$ satisfies the assumption of Theorem \ref{thm:vstat}. In fact, since $s_\pi(x) = \nabla \log(\pi(x))$,  one obtains from integration  by part that 
$$
\begin{aligned}
& \bE [u_\pi (X_1, X_2) | X_1 = x]  = \int_{\R^d} u_\pi (x, y) d\pi(y) \\
& = \int_{\R^d} s_\pi(x) k(x, y) s_\pi(y) + s_q(x)^\trn \nabla_y k(x, y)\\
& \qquad + s_\pi(y)^\trn \nabla_x k(x, y) + \tr(\nabla_{x} \nabla_{y} k(x,y))d\pi(y)\\
& = \int_{\R^d} k(x, y) s_\pi(x)^\trn \nabla_y \pi(y) dy - \int_{\R^d} \nabla_y \cdot (s_\pi(x) \pi(y))dy\\
& \qquad + \int_{\R^d} \nabla_y \pi(y)^\trn \nabla_x k(x, y)dy - \int_{\R^d} \nabla_y \pi(y)^\trn \nabla_x k(x, y)dy\\
&  = 0.
\end{aligned} 
$$
Similarly, one has 
$$
 \bE [u_\pi (X_1, X_2) | X_2 = y] = 0.
$$
This shows that $u_\pi$ satisfies the condition of degeneracy \eqref{eq:degenerate}.

Next, we show that $u_\pi$ satisfies the bound \eqref{eq:g1} with a function $g$ satisfying the moment condition \eqref{eq:g2}. In fact, by Assumption \ref{assm:k3} on the kernel $k$ and Assumption \ref{assm:pi} on the target density $\pi$, 
\begin{align*}
|u_\pi(x, y)| & \leq L^2 K_1 (1 + |x|)\cdot (1 + |y|) + L K_1 (1 + |x|  + 1 + |y|) +  K_1(1 + d)\\
&  \leq K_1 (L + 1)^2 (\sqrt{d} + 1 + |x|)\cdot (\sqrt{d} + 1 + |y|)\\
& =: g(x) \cdot g(y),
\end{align*}
where $g(x) = \sqrt{K_1}(L + 1)(\sqrt{d} + 1 + |x|)$ and  the constant $L$ is defined in \eqref{eq:spibd}. To verify $g$ satisfies \eqref{eq:g2}, we write 
\begin{equation}\label{eq:gk}
\begin{aligned}
& \bE_{X\sim \pi} [g(X)^k]  = (\sqrt{K_1}(L + 1))^k \bE_{X\sim \pi} [(\sqrt{d} + 1 + |X|)^k]\\
& \leq (\sqrt{K_1}(L + 1))^k \bE_{X\sim \pi} [(\sqrt{d} + 1 + |m|+ |X-m|)^k]\\
& = (\sqrt{K_1}(L + 1))^k\Big( (\sqrt{d} + 1 + |m|)^{k}  + \sum_{j=1}^k \begin{pmatrix} k\\j
\end{pmatrix}(\sqrt{d} + 1 + |m|)^{k-j} \bE_{X\sim \pi} |X-m|^{j}  \Big).
\end{aligned}
\end{equation}
Thanks to Proposition \ref{prop:moment1}, we have for any $j\geq 1$, 
\begin{equation}\label{eq:moment2}
\begin{aligned}
\bE_{X\sim \pi} |X-m|^{j} & \leq \big(\bE_{X\sim \pi} |X-m|^{2j} \big)^{1/2} \\
& \leq (2j)^{1/2} \cdot \Big( (2\upsilon \sqrt{d})^{2j} + \frac{1}{2} \big(\frac{4\upsilon}{\sqrt{2}}\big)^{2j}\cdot (2j)^j\Big)^{1/2}\\
&\leq (2j)^{1/2} \cdot \Big(2\max(4\upsilon,1) \cdot \sqrt{d} \cdot \sqrt{j}\Big)^j \\
& \leq \Big(2e^{1/e} \max(4\upsilon,1) \cdot \sqrt{d} \cdot \sqrt{j}\Big)^j,
\end{aligned}
\end{equation}
where we have used the simple fact that $(2j)^{1/(2j)} \leq e^{1/e}$ for any $j\geq 1$ in the last inequality. Plugging \eqref{eq:moment2} into \eqref{eq:gk} yields that 
\begin{equation}\label{eq:gk2}
\begin{aligned}
& \bE_{X\sim \pi} [g(X)^k]  \leq 2(\sqrt{K_1}(L + 1))^k \Big(\sqrt{d} + 1 + |m| + 2e^{1/e} \max(4\upsilon,1)  \sqrt{d} \cdot \sqrt{k} \Big)^k\\
& =2 (\sqrt{K_1}(L + 1))^k \exp\Big(k \log\Big(\sqrt{d} + 1 + |m| + 2e^{1/e} \max(4\upsilon,1)  \sqrt{d} \cdot \sqrt{k} \Big)\Big).
\end{aligned}
\end{equation}
Using the fact that $\log(a + b) - \log (a) = \log (1 +b/a) \leq b/a$ for all $a,b\geq 1$, one has 
\begin{equation}\label{eq:gk3}
\begin{aligned}
 &\exp\Big(k \log\Big(\sqrt{d} + 1 + |m| + 2e^{1/e} \max(4\upsilon,1)  \sqrt{d} \cdot \sqrt{k} \Big)\Big) \\
 &\leq \exp\Big(k   \log\Big(\underbrace{2e^{1/e} \max(4\upsilon,1)  \sqrt{d}}_{=:A} \cdot \sqrt{k}\Big)\Big) \cdot \exp\Big(\sqrt{k}\cdot \underbrace{\frac{\sqrt{d} + 1 + |m|}{2e^{1/e} \max(4\upsilon,1)  \sqrt{d}}}_{=:B} \Big)  .
\end{aligned}
\end{equation}
Since by assumption $|m|\leq m^\ast \sqrt{d}$  and $d\geq 1$, we have 
$$
B\leq \frac{2 + m^\ast}{2e^{1/e} \max(4\upsilon,1)}=:  \tilde{B}.
$$
As a consequence of above and the fact that $k! \geq \Big(\frac{k}{3}\Big)^k$ for any $k\in \N_+$,
\begin{equation}\label{eq:gk4}
\begin{aligned}
 \exp\Bigl(k \log&\Bigl(\sqrt{d} + 1 + |m| + 2e^{1/e} \max(4\upsilon,1)  \sqrt{d} \cdot \sqrt{k} \Bigr)\Bigr) \\
 &\leq \exp(k\log k/2 )\cdot \exp (k(\log A + \tilde{B}))\\
 & = \big(\frac{k}{3}\big)^{k/2}\cdot  \big(\sqrt{3} A \exp(\tilde{B} + \frac{1}{2})\big)^k\\
 & \leq k! \cdot  \big(\sqrt{3} A \exp(\tilde{B} + \frac{1}{2})\big)^k. 
 \end{aligned}
\end{equation}
Combining this with \eqref{eq:gk2} implies that the moment bound assumption \eqref{eq:g2} holds  with the constants 
$$
J = \sqrt{3K_1} (L+1) A \exp\Big(\tilde{B} + \frac{1}{2}\Big) \textrm{ and } \xi = 2J.
$$
Therefore it follows from the definition of $\mathrm{KSD}(P_n,\pi)$ in \eqref{eq:kds2} and  the concentration bound \eqref{eq:vstatineq2} implied by Theorem \ref{thm:vstat} that with at least probability $1-\delta$, 
$$
\textrm{KSD} (P_n, \pi) \leq \frac{C}{\sqrt{n}} ,
$$
with the constant 
$$
C =  J\Big(\frac{\log(\frac{C_1}{\delta})}{C_2} + 2\sqrt{\frac{\log(\frac{C_1}{\delta})}{C_2}} \Big).
$$
Since by definition the constant $A = \mathcal{O}(\sqrt{d})$ for large $d$, we have that the constant $C= \mathcal{O}(\sqrt{d})$. 
This completes the proof.

\section{Summarizing Propositions \ref{thm:wd1} - \ref{thm:ksd}} The theorem below summarizes the Propositions \ref{thm:wd1} - \ref{thm:ksd} above, serving as one of the ingredients for proving Theorem \ref{thm:uat}.

\begin{theorem}\label{thm:empiricalmain}
Let $\pi$ be a probability measure on $\R^d$ and let $P_n =\frac{1}{n} \sum_{i=1}^n \delta_{X_i}$ be the empirical measure associated to the i.i.d. samples $\{X_i\}_{i=1}^n$ drawn from $\pi$. Then we have the following: 

\begin{enumerate}
    \item If $\pi$ satisfies $M_3 = \bE_{X\sim \pi}|X|^3 < \infty$, then there exists a realization of  empirical measure $P_n$  such that
    $$
\mW_1(P_n, \pi) \leq C \cdot \begin{cases} n^{-1/2}, & d=1,\\
n^{-1/2} \log n, & d=2,\\
n^{-1/d}, & d\geq 3,
\end{cases}
$$
where the constant $C$ depends only on $M_3$. 

\item If $k$ satisfies Assumption \ref{assm:k2} with constant $K_0$, then there exists a realization of  empirical measure $P_n$  such that
$$
\mmdop(P_n, \pi) \leq  \frac{C}{\sqrt{n}},
$$
where the constant $C$ depending only on $K_0$.

\item If $\pi$ satisfies Assumption \ref{assm:pi}  and \ref{assum:sg} and $k$ satisfies Assumption \ref{assm:k3} with constant $K_1$, then there exists a realization of  empirical measure $P_n$  such that 
$$\ksd(P_n, \pi) \leq C \sqrt{\frac{d}{n}},
$$
where the constant $C$ depends only on $ L,K_1,m^\ast, \upsilon$.

\end{enumerate}
\end{theorem}

\section{Semi-discrete optimal transport with quadratic cost}\label{sec:otappendix}
\subsection{Structure theorem of optimal transport map}\label{sec:structure}
We recall the structure theorem of optimal transport map between $\mu$ and $\nu$ under the assumption that $\mu$ does not give mass to null sets. 
\begin{theorem}[{\cite[Theorem 2.9 and Theorem 2.12]{villani2003topics}}]\label{thm:dualKan}
  Let $\mu$ and $\nu$ be two probability measures on $\R^d$ with finite second moments. Assume that $\mu$ is absolutely continuous with respect to the Lebesgue measure. Consider the functionals $\mathcal{K}$ and $\mathcal{J}$ defined in Monge's problem \eqref{eq:monge} and dual Kantorovich problem \eqref{eq:kanto1} with $c = \frac{1}{2}|x-y|^2$. Then 

  (i) there exists a unique solution $\pi$ to Kantorovich's problem, which is given by 
  $
  \pi(dx dy) = (\mathbf{Id} \times T)_{\#} \mu 
  $
  where
  $T(x)=  \nabla \bar{\varphi}(x)$ $\mu$-a.e.$x$ for some convex function $\bar{\varphi}: \R^d \gt \R$. In another word, $T(x)=  \nabla \bar{\varphi}(x)$ is the unique solution to Monge's problem.

  (ii) there exists an optimal pair $(\varphi(x), \varphi^c(y))$ or $(\psi^c(x), \psi(y))$ solving the dual Kantorovich's problem, i.e. $\sup_{(\varphi, \psi)\in \Phi_c} \mathcal{J}(\varphi, \psi) = \mathcal{J}(\varphi, \varphi^c) = \mathcal{J}(\psi^c, \psi)$;

  (iii) the function $\bar{\varphi}(x)$ can be chosen as $\bar{\varphi}(x) = \frac{1}{2} |x|^2 - \varphi(x)$ (or $\bar{\varphi}(x) = \frac{1}{2} |x|^2 - \psi^c(x)$) where $(\varphi(x), \varphi^c(y))$  (or $(\psi^c(x), \psi(y))$) is an optimal pair which maximizes $\mathcal{J}$ within the set $\Phi_c$. 
\end{theorem}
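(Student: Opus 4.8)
The plan is to recognize the piecewise-affine function $\bar\varphi$ as a Brenier potential delivered by the structure theorem of optimal transport (Theorem~\ref{thm:dualKan}), and then to read off its gradient on the power cells. Let $\psi=(\psi_1,\dots,\psi_n)$ be the given maximizer of $\mathcal{F}$ and write $\varphi:=\psi^c$ for the $c$-transform of the function $y_j\mapsto\psi_j$, i.e.\ $\varphi(x)=\min_j\bigl(\frac{1}{2}|x-y_j|^2-\psi_j\bigr)$; note also that the semi-discrete Monge problem \eqref{eq:mongesd} is exactly Monge's problem \eqref{eq:monge} for this discrete $\nu$.

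\textbf{Step 1 (from a maximizer of $\mathcal{F}$ to an optimal Kantorovich pair).} By construction $\varphi(x)+\psi_j\le\frac{1}{2}|x-y_j|^2$ for all $x\in\R^d$ and $j$, so $(\varphi,\psi)$ is admissible for the dual problem and $\mathcal{J}(\varphi,\psi)=\int\varphi\,d\mu+\sum_j\psi_j\nu_j=\mathcal{F}(\psi)$ by the definition \eqref{eq:mF} of $\mathcal{F}$. Conversely, any admissible pair $(\phi,\eta)\in\Phi_c$ satisfies $\phi\le\eta^c$ pointwise, hence $\mathcal{J}(\phi,\eta)\le\mathcal{J}(\eta^c,\eta)=\mathcal{F}(\eta)\le\mathcal{F}(\psi)$ since $\psi$ maximizes $\mathcal{F}$. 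Combined with the Kantorovich duality recalled above this gives
\[
\mathcal{J}(\varphi,\psi)=\mathcal{F}(\psi)=\sup_{(\phi,\eta)\in\Phi_c}\mathcal{J}(\phi,\eta)=\inf_{\gamma\in\Gamma(\mu,\nu)}\mathcal{K}(\gamma),
\]
so $(\varphi,\psi)$ is an optimal pair for the dual Kantorovich problem with quadratic cost.

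\textbf{Step 2 (apply the structure theorem).} Since $\mu$ is absolutely continuous with finite second moment and the finitely supported $\nu$ also has finite second moment, Theorem~\ref{thm:dualKan} applies with $c=\frac{1}{2}|x-y|^2$: Monge's problem has a unique solution $T=\nabla\bar\varphi$ with $\bar\varphi(x)=\frac{1}{2}|x|^2-\varphi(x)$ for the optimal potential $\varphi$ from Step~1. Substituting,
\[
\bar\varphi(x)=\frac{1}{2}|x|^2-\min_j\Bigl(\frac{1}{2}|x-y_j|^2-\psi_j\Bigr)=\max_j\Bigl(x\cdot y_j-\frac{1}{2}|y_j|^2+\psi_j\Bigr)=\max_j\{x\cdot y_j+m_j\},
\]
with $m_j:=\psi_j-\frac{1}{2}|y_j|^2\in\R$; this is convex piecewise affine, $T=\nabla\bar\varphi$ solves \eqref{eq:mongesd}, and being a transport map $T_\#\mu=\nu$.

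\textbf{Step 3 (cell-wise formula) and the main obstacle.} Completing the square shows that $x\cdot y_j+m_j\ge x\cdot y_k+m_k$ for every $k$ is equivalent to $\frac{1}{2}|x-y_j|^2-\psi_j\le\frac{1}{2}|x-y_k|^2-\psi_k$ for every $k$, i.e.\ the $j$-th affine piece of $\bar\varphi$ is maximal exactly on the power cell $P_j$ of \eqref{eq:power}. On the open set where the $j$-th piece is the \emph{strict} maximizer, $\bar\varphi$ agrees locally with $x\mapsto x\cdot y_j+m_j$, so $\nabla\bar\varphi(x)=y_j$ there; the set where two pieces tie lies in the finite union of hyperplanes $\bigl\{x:x\cdot(y_j-y_k)=m_k-m_j\bigr\}$ (after merging any coincident atoms $y_j=y_k$), which is Lebesgue-null and hence $\mu$-null because $\mu\ll\mathrm{Leb}$. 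Therefore $T(x)=\nabla\bar\varphi(x)=y_j$ for $\mu$-a.e.\ $x\in P_j$, and $\mu(P_j)=T_\#\mu(\{y_j\})=\nu_j$, which is the asserted description of $T$. The two delicate points are Step~1, where promoting a maximizer of the finite-dimensional functional $\mathcal{F}$ to an optimal (infinite-dimensional) Kantorovich potential rests on the $c$-transform inequalities, and the null-set bookkeeping in Step~3, which is precisely where absolute continuity of $\mu$ is used. If one prefers to avoid invoking Theorem~\ref{thm:dualKan} for the identity $T_\#\mu=\nu$, the same conclusion follows from the first-order condition for the concave functional $\mathcal{F}$: by absolute continuity of $\mu$, $\mathcal{F}$ is differentiable at $\psi$ with $\partial_{\psi_j}\mathcal{F}(\psi)=\nu_j-\mu(P_j)$, so a maximizer forces $\mu(P_j)=\nu_j$, after which one checks directly from \eqref{eq:mF2} that the transport cost of $T$ equals $\mathcal{F}(\psi)$, the optimal Kantorovich value, giving optimality of $T$.
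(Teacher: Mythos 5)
The statement you were asked to prove is Theorem~\ref{thm:dualKan} itself, i.e.\ Brenier's structure theorem for a general absolutely continuous $\mu$ and an arbitrary $\nu\in\mathcal{P}_2(\R^d)$: existence and uniqueness of the optimal coupling, its concentration on the graph of $T=\nabla\bar\varphi$ for a convex $\bar\varphi$, existence of an optimal $c$-conjugate dual pair, and the identification $\bar\varphi(x)=\frac{1}{2}|x|^2-\varphi(x)$. Your proposal does not prove any of this; it \emph{invokes} Theorem~\ref{thm:dualKan} as a black box in Step~2 and uses it to derive the semi-discrete characterization $\bar\varphi(x)=\max_j\{x\cdot y_j+m_j\}$ together with the cell-wise formula $T=y_j$ on $P_j$. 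Relative to the assigned statement this is circular: one cannot establish the structure theorem by applying the structure theorem. What you have actually written is, in substance, the paper's proof of Theorem~\ref{thm:semidisot} in Appendix~\ref{sec:thmsemiot} --- the passage from a maximizer of the finite-dimensional functional $\mathcal{F}$ to an optimal Kantorovich pair, and the first-order condition $\mu(P_j)=\nu_j$ corresponding to Lemmas~\ref{lem:concave} and~\ref{lem:devF}. That material is correct and well organized, but it is attached to the wrong theorem.

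For the record, the paper does not prove Theorem~\ref{thm:dualKan} either: it is recalled verbatim from \cite[Theorems 2.9 and 2.12]{villani2003topics}. A self-contained proof would require ingredients genuinely absent from your write-up: Kantorovich duality with attainment of the supremum by a $c$-conjugate pair $(\varphi,\varphi^c)$ (via $c$-cyclical monotonicity of optimal plans or a compactness argument on potentials); the observation that for $c=\frac{1}{2}|x-y|^2$ the substitution $\varphi\mapsto\frac{1}{2}|x|^2-\varphi$ turns $c$-concavity into ordinary convexity and the $c$-transform into the Legendre transform; almost-everywhere differentiability of convex functions combined with $\mu\ll\mathrm{Leb}$ so that $\nabla\bar\varphi$ is defined $\mu$-a.e.\ and the optimal plan is deterministic; and a separate uniqueness argument. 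If your intent was to prove Theorem~\ref{thm:semidisot}, the proposal is essentially the paper's argument; as a proof of Theorem~\ref{thm:dualKan} it has a fatal gap.
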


\subsection{Proof of Theorem  \ref{thm:semidisot}}\label{sec:thmsemiot}

 Recall that the dual Kantorovich problem in the semi-discrete case 
reduces to maximizing the following functional
\begin{equation}\label{eq:mF3}
\begin{aligned}
  \mF(\psi) = \int \inf_{j} \left(\frac{1}{2}|x-y_j|^2 - \psi_j \right) \rho(x)dx + \sum_{j=1}^n \psi_j \nu_j.
\end{aligned}
\end{equation} 
Proof of Theorem  \ref{thm:semidisot} relies on two useful lemmas on the functional $\mF$.  The first lemma below shows that the functional $\mF$ is concave, whose proof adapts that  of \cite[Theorem 2]{levy2018notions} for semi-discrete optimal transport with the quadratic cost. 

\begin{lemma}\label{lem:concave}
  Let $\rho$ be a probability density on $\R^d$. Let $\{y_j\}_{j=1}^n \subset \R^d$ and let $\{\nu_j\}_{j=1}^n \subset [0,1]$ be such that $\sum_{j=1}^n \nu_j =1$. Then the functional $\mF$ be defined by \eqref{eq:mF3} is concave.  
\end{lemma}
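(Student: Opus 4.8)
The plan is to read off the concavity of $\mF$ directly from its structure: for each fixed $x\in\R^d$ the quantity $\inf_{j}\bigl(\tfrac12|x-y_j|^2-\psi_j\bigr)$ is the infimum of the $n$ functions $\psi\mapsto \tfrac12|x-y_j|^2-\psi_j$, each of which is affine in $\psi=(\psi_1,\dots,\psi_n)$; an infimum of affine functions is concave. Concavity is then preserved under integration against the nonnegative measure $\rho(x)\,dx$ and under adding the linear term $\sum_{j=1}^n\psi_j\nu_j$, which gives the claim. This mirrors the argument of \cite[Theorem 2]{levy2018notions}; the only extra point to address, since $\mu$ lives on all of $\R^d$ rather than a compact set, is that $\mF$ is finite-valued.

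For that preliminary point I would fix $\psi\in\R^n$ and set $h_\psi(x):=\inf_{j}\bigl(\tfrac12|x-y_j|^2-\psi_j\bigr)$. Using $\inf_j \tfrac12|x-y_j|^2\ge 0$ one has $h_\psi(x)\ge -\max_j\psi_j$, so $h_\psi$ is bounded below; and $h_\psi(x)\le \tfrac12|x-y_1|^2-\psi_1$, which is $\rho(x)\,dx$-integrable because $\mu\in\mP_2(\R^d)$. Hence $\int h_\psi(x)\rho(x)\,dx$, and therefore $\mF(\psi)$, is a finite real number for every $\psi$.

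The core step is then the pointwise convex-combination estimate. Given $\psi,\psi'\in\R^n$ and $t\in[0,1]$, write $\psi^t=t\psi+(1-t)\psi'$; for every $x$ and every $j$,
$$
\tfrac12|x-y_j|^2-\psi^t_j = t\Bigl(\tfrac12|x-y_j|^2-\psi_j\Bigr)+(1-t)\Bigl(\tfrac12|x-y_j|^2-\psi'_j\Bigr)\ \ge\ t\,h_\psi(x)+(1-t)\,h_{\psi'}(x),
$$
and taking the infimum over $j$ gives $h_{\psi^t}(x)\ge t\,h_\psi(x)+(1-t)\,h_{\psi'}(x)$. Integrating this inequality against $\rho(x)\,dx\ge 0$ and adding the identity $\sum_j\psi^t_j\nu_j=t\sum_j\psi_j\nu_j+(1-t)\sum_j\psi'_j\nu_j$ yields $\mF(\psi^t)\ge t\,\mF(\psi)+(1-t)\,\mF(\psi')$. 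There is no genuine obstacle here; if anything, the only place demanding a moment's care is the finiteness check above, which is exactly where the finite-second-moment hypothesis on $\mu$ enters.
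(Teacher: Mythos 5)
Your proof is correct and rests on the same idea as the paper's: $\mF$ is concave because it is built from an infimum of functions affine in $\psi$. The only cosmetic difference is that the paper takes the infimum over assignment maps $\mathcal{A}$ outside the integral (each $\widetilde{\mF}(\mathcal{A},\cdot)$ being affine), whereas you take the infimum pointwise in $x$ inside the integral and then use that integration against $\rho\,dx$ preserves concavity; your additional finiteness check via the second-moment assumption is a harmless extra precaution.
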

\begin{proof}
Let $\mathcal{A}:\R^d \gt \{1,2,\cdots,n\}$ be an assignment function which assigns a point $x\in \R^d$ to the index $j$ of some point $y_j$.
  Let us also define the function 
$$
\widetilde{\mathcal{F}}(\mathcal{A}, \psi) = \int \Big(\frac{1}{2} |x-y_{\mA(x)}|^2 - \psi_{\mA(x)}\Big) \rho(x) dx + \sum_{j=1}^n \psi_j \nu_j.
$$  
Then by definition $\mF(\psi) = \inf_{\mA} \widetilde{\mathcal{F}}(\mathcal{A}, \psi)$. Denote $\mA^{-1}(j) = \{x\in \R^d | \mA(x) = j\}$. Then 
$$
\begin{aligned}
\widetilde{\mathcal{F}}(\mathcal{A}, \psi) & =  \sum_{j=1}^n \left[\int_{\mA^{-1}(j)} \Big(\frac{1}{2} |x-y_{j}|^2 - \psi_{j}\Big) \rho(x) dx +  \psi_j \nu_j\right]\\
& = \sum_{j=1}^n \int_{\mA^{-1}(j)} \frac{1}{2} |x-y_{j}|^2  \rho(x) dx + \sum_{j=1}^n \psi_j  \Big(\nu_j - \int_{\mA^{-1}(j)} \rho(x)dx\Big).
\end{aligned}
$$
Since the function $\widetilde{\mathcal{F}}(\mathcal{A}, \psi)$ is affine in $\psi$ for every $\mA$, it follows that $\mF(\psi) = \inf_{\mA} \widetilde{\mathcal{F}}(\mathcal{A}, \psi)$ is concave. 
\end{proof}

The next lemma computes the gradient of the concave function $\mF$; see  \cite[Section 7.4]{levy2018notions} for the corresponding result with general transportation cost.
\begin{lemma}\label{lem:devF}
  Let $\rho$ be a probability density on $\R^d$. Let $\{y_j\}_{j=1}^n \subset \R^d$ and let $\{\nu_j\}_{j=1}^n \subset [0,1]$ be such that $\sum_{j=1}^n \nu_j =1$. Denote by $P_j(\psi)$ the power diagram associated to $\psi$ and $y_j$. Then 
  \begin{equation}\label{eq:devF}
    \partial_{\psi_i} \mF(\psi) = \nu_i -\mu(P_i(\psi)) = \nu_i - \int_{P_i(\psi)} \rho(x)dx.
  \end{equation}
  
\end{lemma}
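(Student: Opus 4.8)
The plan is to differentiate $\mF$ under the integral sign, handling its two pieces separately. The linear term $\sum_{j=1}^n\psi_j\nu_j$ contributes exactly $\nu_i$ to $\partial_{\psi_i}\mF(\psi)$, so the task reduces to showing that $I(\psi):=\int h(x,\psi)\,\rho(x)\,dx$, with $h(x,\psi):=\inf_j\bigl(\tfrac12|x-y_j|^2-\psi_j\bigr)$, satisfies $\partial_{\psi_i}I(\psi)=-\int_{P_i(\psi)}\rho(x)\,dx$. First I would check that $I(\psi)$ is finite: from $-\max_j\psi_j\le h(x,\psi)\le \tfrac12|x-y_1|^2-\psi_1$ one gets $|h(x,\psi)|\le \tfrac12|x-y_1|^2+C_\psi$, which is $\mu$-integrable since $\mu\in\mP_2(\R^d)$.

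Next I would analyze the difference quotient $\tfrac1t\bigl(h(x,\psi+te_i)-h(x,\psi)\bigr)$, where $e_i$ is the $i$-th standard basis vector. The key elementary bound is $|h(x,\psi+te_i)-h(x,\psi)|\le|t|$ for every $x$ (perturbing one coordinate of $\psi$ by $t$ moves the infimum by at most $|t|$), so the difference quotients are bounded by $1$ in absolute value; since $\rho\in L^1(\R^d)$, dominated convergence will let me pass $t\to0$ inside $\int(\cdot)\rho\,dx$. For the pointwise limit (this is essentially Danskin's theorem for a finite index set), fix $x$ at which the infimum is attained at a \emph{unique} index $i_0(x)$, with gap $\delta(x)=\min_{j\ne i_0}\bigl(\tfrac12|x-y_j|^2-\psi_j\bigr)-\bigl(\tfrac12|x-y_{i_0}|^2-\psi_{i_0}\bigr)>0$. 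For $|t|<\delta(x)$: if $i=i_0(x)$ then $h(x,\psi+te_i)=\tfrac12|x-y_{i_0}|^2-\psi_{i_0}-t=h(x,\psi)-t$, whereas if $i\ne i_0(x)$ the $i_0$-term is unchanged and remains strictly smallest, so $h(x,\psi+te_i)=h(x,\psi)$. Hence $\partial_{\psi_i}h(x,\psi)=-\1[i_0(x)=i]$ at every such $x$.

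Finally I would upgrade this to an a.e.\ identity with $P_i(\psi)$. Since the points $\{y_j\}$ are distinct (harmless, as otherwise $\nu=\sum_j\nu_j\delta_{y_j}$ can be collapsed), the set of $x$ where the infimum is attained at two distinct indices lies in the finite union of affine hyperplanes $\{x:x\cdot(y_k-y_j)=\tfrac12|y_k|^2-\tfrac12|y_j|^2-\psi_k+\psi_j\}$, $j\ne k$, which is Lebesgue-null; off this null set, $\{x:i_0(x)=i\}$ coincides with $P_i(\psi)$. Combining with the dominated-convergence step gives $\partial_{\psi_i}I(\psi)=-\int\1_{P_i(\psi)}(x)\rho(x)\,dx=-\mu(P_i(\psi))$, and adding the $\nu_i$ from the linear term yields \eqref{eq:devF}. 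The only genuinely delicate point is the null-set argument ensuring a.e.\ uniqueness of the minimizer and the identification of $\{i_0=i\}$ with $P_i(\psi)$; the interchange of limit and integral is immediate from the uniform bound on the difference quotients.
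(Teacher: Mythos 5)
Your proposal is correct and follows essentially the same route as the paper: differentiate under the integral sign, using the fact that the piecewise-affine envelope has partial derivative $-\1[i_0(x)=i]$ wherever the minimizing index is unique, identify that set with $P_i(\psi)$ up to a Lebesgue-null union of hyperplanes, and justify the interchange via the uniform $1$-Lipschitz bound on the difference quotients together with dominated convergence. The only cosmetic difference is that the paper first completes the square to work with $\sup_j\{x\cdot y_j+\psi_j-\tfrac12|y_j|^2\}$, while you work directly with $\inf_j(\tfrac12|x-y_j|^2-\psi_j)$; your treatment of the tie set is in fact slightly more explicit than the paper's.
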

\begin{proof}
By the definition of $\mathcal{F}$ in \eqref{eq:mF3}, we rewrite  $\mathcal{F}$ as 
$$\begin{aligned}
  \mathcal{F} (\psi) & = \int \frac{1}{2} |x|^2 \rho(dx) + \int \inf_{j} \Big\{- x\cdot y_j + \frac{1}{2} |y_j|^2 - \psi_j\Big\} \rho(x) dx + \sum_{j=1}^n \psi_j \nu_j\\
  & = \int \frac{1}{2} |x|^2 \rho(dx) - \int \sup_{j} \Big\{x\cdot y_j +  \psi_j - \frac{1}{2} |y_j|^2\Big\} \rho(x) dx + \sum_{j=1}^n \psi_j \nu_j
\end{aligned}
$$
To prove \eqref{lem:devF}, it suffices to prove that 
\begin{equation}\label{eq:dpsi}
  \partial_{\psi_i}  \Big(\int \sup_{j} \Big\{x\cdot y_j +  \psi_j - \frac{1}{2} |y_j|^2\Big\} \rho(x) dx\Big) = \int_{P_i(\psi)} \rho(x)dx.
\end{equation}
Note that the partial derivative on the left side of above makes sense since  $g(x, \psi):= \sup_{j} \{x\cdot y_j + \psi_j - \frac{1}{2} |y_j|^2\}$ is convex with respect to $(x, \psi)$ on $\R^d\times \R^d$ so that the resulting integral against the measure $\rho$ is also convex (and hence Lipschitz) in $\psi$. To see \eqref{eq:dpsi}, since $g(x,\psi)$ is convex and  piecewise linear in $\psi$ for any fixed x, it is easy to observe that 
$$
\partial_{\psi_i} g(x,\psi) = \delta_{ij} \text{ if } x\in \Big\{x\in \R^d \Big| x\cdot y_j  + \psi_j - \frac{1}{2}|y_j|^2 = g(x, \psi) \Big\}.
$$
However, by subtracting $\frac{1}{2} |x|^2$ on both sides of the equation inside the big parenthesis and then flipping the sign one sees that  
$$
\Big\{x\in \R^d \Big| x\cdot y_j  + \psi_j - \frac{1}{2}|y_j|^2 = g(x, \psi) \Big\} = P_j(\psi).
$$
Namely we have obtained that $$
\partial_{\psi_i} g(x, \psi) = \delta_{i,j} \text{ if } x\in P_j(\psi).
$$ 
In particular, this implies that $\psi \gt g(x, \psi)$ is 1-Lipschitz in $\psi$ uniformly with respect to $x$. Finally since $\rho(x)$ is a probability measure, the desired identity \eqref{lem:devF} follows from the equation above and the dominated convergence theorem.
  This completes the proof of the lemma. 
\end{proof}

With the lemmas above, we are ready to prove Theorem \ref{thm:semidisot}.
In fact, 
according to Lemma  \ref{lem:concave} and Lemma \ref{lem:devF}, $\psi = (\psi_1, \cdots, \psi_n)$ is a maximizer of the functional $\mF$ if and only if 
$$
\partial_{\psi_i}\mF(\psi) = \nu_i - \mu(P_i(\psi)) = \nu_i - \int_{P_i(\psi)} \rho(x)dx = 0. 
$$
Since the dual Kantorovich problem in the semi-discrete setting reduces to the problem of maximizing $\mF$, it follows from Theorem \ref{thm:dualKan} that the optimal transport map $T$ solving the semi-discrete Monge's problem \eqref{eq:mongesd} is given by $T(x)  = \nabla \bar{\varphi}(x)$ where $\bar{\varphi}(x) = \frac{1}{2} |x|^2 - \varphi(x)$ and $\varphi(x)  = \min_{j} \frac{1}{2}|x-y_j|^2 - \psi_j$. 
Consequently, 
$$\begin{aligned}
\bar{\varphi}(x) & = \frac{1}{2} |x|^2 - \varphi(x) \\
& = \frac{1}{2} |x|^2 - \Big(\min_{j} \{\frac{1}{2}|x-y_j|^2 - \psi_j\}\Big)\\
& = \max_{j} \{x\cdot y_j + m_j\}
\end{aligned}
$$
with $m_j  = \psi_j - \frac{1}{2}|y_j|^2$.  Moreover, noticing that $\varphi(x)$ can be rewritten as 
$$
\varphi(x)  = \frac{1}{2}|x-y_j|^2 - \psi_j \text{ if } x\in P_j(\psi),
$$
one obtains that 
 $T(x)  = \nabla \bar{\varphi}(x) = y_j $ if $x\in  P_j(\psi)$.

\subsection{Proof of Proposition \ref{prop:express}}\label{sec:proofexpress}

Let us first consider the case that $n = 2^k$ for some $k\in \N$. Then 
$$
\begin{aligned}
\bar{\varphi}(x) = \max_{j=1,\cdots,2^k} \{x\cdot y_j + m_j\} = \max_{j=1,\cdots,2^{k-1}} \max_{i\in \{2j-1,2j\}}\{x\cdot y_i + m_i\}.
\end{aligned}
$$
Let us define  maps $\varphi_n : \R^{n} \gt \R^{n/2}$ and  $\psi:\R^d \gt \R^n$ by setting
$$
[\varphi_n(z)]_{i} = \max \{z_{2i-1}, z_{2i}\}, i=1,\cdots, n/2 \text{ and } 
[\psi(x)]_{j} = x\cdot y_j + m_j, j = 1,\cdots, n.
$$
Then by definition it is straightforward that 
\begin{equation}\label{eq:phibar}
\bar{\varphi}(x) = (\varphi_2 \circ \varphi_4\circ \cdots \circ \varphi_{n/2}\circ \varphi_{n}\circ \psi)(x).
\end{equation}
By defining 
$$
Y = \begin{pmatrix}
y_1^\trn\\
y_2^\trn\\ 
\vdots\\ y_n^\trn\end{pmatrix}, m =\begin{pmatrix} m_1\\
m_2\\
\vdots\\m_n\end{pmatrix},
$$
we can write the map  $\psi$ as 
\begin{equation}\label{eq:psi}
\psi(x) = Y\cdot x + m.
\end{equation}
Moreover, thanks to the following simple equivalent formulation of the maximum function: 
$$\begin{aligned}
\max(a, b) &= \relu(a-b) + \relu(b) - \relu(-b) \\
& = h^\trn \cdot \relu\left(A \begin{pmatrix}
a\\
b
\end{pmatrix}\right),
\end{aligned}
$$
where 
$$
A = \begin{pmatrix}
1 & -1 \\
0 & 1\\
0 & -1
\end{pmatrix},\quad 
h  = \begin{pmatrix}
1\\ 1\\ -1
\end{pmatrix},
$$
we can express the map $\varphi_n$ in terms of a two-layer neural network as follows 
\begin{equation}\label{eq:phin}
\varphi_n(z) = H_{n/2}\cdot \relu (A_n \cdot z), 
\end{equation}
where $A_n = \oplus^n A$ and $H_n =  \oplus^n h$. Finally, by combining \eqref{eq:phibar}, \eqref{eq:psi} and \eqref{eq:phin}, one sees that $\bar{\varphi}$ can be expressed in terms of a DNN of width $n$ and depth $\log n$ with parameters $(W^{\ell}, b^\ell)_{\ell=1}^{L+1}$ defined by 
$$
\begin{aligned}
& W^0 = A_n \cdot Y, \quad b^0  = A_n \cdot m,\\
& W^1 = A_{n/2}\cdot H_{n/2},\quad b^1 = 0,\\
& W^2 = A_{n/4} \cdot H_{n/4}, \quad b^2 = 0,\\
& \cdots,\\
& W_{L-1} = A_2 \cdot H_2, \quad b^{L-1} = 0,\\
& W_{L} = H_1, \quad b^{L} = 0.
\end{aligned}
$$
In the general case where $\log_2 n\notin \N$, we set $k = \ceil{\log_2 n}$ so that $k$ is smallest integer such that $2^k >n$.  By redefining $y_j = 0 $ and $ m_j =0$ for $j=n+1,\cdots,2^k$, we may still write $\bar{\varphi}(x) = \max_{j=1,\cdots,2^k} \{x\cdot y_j + m_j\}$ so that the analysis above directly applies.

\section{Proof of Main Theorem \ref{thm:uat}} \label{sec:proofmain}  

The proof follows directly from Theorem \ref{thm:empiricalmain} and Theorem \ref{thm:otnn}. Indeed, on the one hand, the quantitative  estimate for the convergence of the empirical measure $P_n$ directly translates to the sample complexity bounds in Theorem \ref{thm:uat} with a given error $\eps$.  On the other hand, Theorem~\ref{thm:otnn} provides a push-forward from $p_x$ to the empirical measure $P_n$ based on the gradient of a DNN.

\end{document}